\documentclass{article}
\usepackage{iclr2027_conference,times}

\usepackage{amsmath,amsfonts,bm}

\def\eqref#1{equation~\ref{#1}}

\def\1{\bm{1}}

\def\vtheta{{\bm{\theta}}}

\def\vg{{\bm{g}}}

\def\vr{{\bm{r}}}
\def\vs{{\bm{s}}}

\def\vw{{\bm{w}}}
\def\vx{{\bm{x}}}

\def\vz{{\bm{z}}}

\def\mA{{\bm{A}}}

\def\mD{{\bm{D}}}

\def\mH{{\bm{H}}}
\def\mI{{\bm{I}}}

\def\mL{{\bm{L}}}
\def\mM{{\bm{M}}}

\def\mPhi{{\bm{\Phi}}}

\DeclareMathAlphabet{\mathsfit}{\encodingdefault}{\sfdefault}{m}{sl}
\SetMathAlphabet{\mathsfit}{bold}{\encodingdefault}{\sfdefault}{bx}{n}

\usepackage{url}
\usepackage{amsmath,amssymb,amsthm}
\usepackage{algorithm}
\usepackage{algorithmic}
\usepackage{graphicx}
\usepackage{booktabs}
\usepackage{tabularx}
\usepackage{multirow}
\usepackage{xcolor}
\usepackage{capt-of}
\usepackage{enumitem}
\usepackage{hyperref}
\setlist[itemize]{leftmargin=*}

\newtheorem{theorem}{Theorem}
\newtheorem{lemma}{Lemma}

\newtheorem{corollary}{Corollary}
\theoremstyle{definition}
\newtheorem{definition}{Definition}
\newtheorem{assumption}{Assumption}

\theoremstyle{remark}
\newtheorem{remark}{Remark}

\DeclareMathOperator{\diag}{diag}
\newcommand{\RR}{\mathbb{R}}
\newcommand{\cL}{\mathcal{L}}
\newcommand{\cC}{\mathcal{C}}
\newcommand{\cO}{\mathcal{O}}
\newcommand{\cS}{\mathcal{S}}
\newcommand{\vtg}{\tilde{\mathbf{g}}}

\title{Predicting Block-Coordinate Performance via Cross-Curvature}

\author{
Shengkun Zhu\textsuperscript{1} \quad
Jinshan Zeng\textsuperscript{2} \quad
Zhiqiang Kou\textsuperscript{1} \quad
Yongxin Tong\textsuperscript{3} \quad
Yang Liu\textsuperscript{1}
\\[4pt]
\textsuperscript{1}The Hong Kong Polytechnic University\quad
\textsuperscript{2}School of Management, Xi'an Jiaotong University\\
\textsuperscript{3}Beijing Key Laboratory of AI-Native Data Systems and SKLCCSE Lab, Beihang University
\\[4pt]
{\small\texttt{\{shengkun96.zhu,zhiqikou,yang-veronica.liu\}@polyu.edu.hk,
}}\\
{\small\texttt{jsh.zeng@gmail.com,
yxtong@buaa.edu.cn
}}
}

\renewcommand{\eqref}[1]{Equation~(\ref{#1})}

\newcommand{\eqrefs}[2]{Equations~(\ref{#1}) and~(\ref{#2})}
\newcommand{\eqrefsiii}[3]{Equations~(\ref{#1}), (\ref{#2}) and~(\ref{#3})}
  \iclrfinalcopy
\begin{document}

\addtocontents{toc}{\protect\setcounter{tocdepth}{-1}}

\maketitle
 \lhead{Preprint} 
\suppressfloats[t]

\begin{abstract}
Simultaneous and sequential block updates are two basic optimization strategies used across machine learning, such as neural-network training, federated learning, and low-rank adaptation.
Choosing between them is difficult because their relative advantage depends on both the objective geometry and the number of iterations.
We develop a unified theory for comparing Jacobi (JC), Gauss--Seidel (GS), and partially sequential deterministic block-gradient updates.
Our analysis expresses the one-step loss difference through cross-block curvature, with an $\cO(\eta^3)$ remainder, where $\eta$ is the learning rate.
We derive a signed loss comparison after $K$ iterations with $\cO(K\eta^3)$ error under regularity conditions and $\eta K\le T$ for fixed $T$, identifying the better method when the predicted difference exceeds this error.
We evaluate these formulas along observed training trajectories across
different machine learning settings. Over 500 iterations, our theory correctly identifies the lower-loss method in 98.0\% of iterations for the neural network, 83.4\% for federated learning, and 97.6\% for LoRA. Applying the loss recursion at each step using the measured parameter difference raises these rates to 100.0\%, 93.2\%, and 99.6\%, respectively.
\end{abstract}

\section{Introduction}

Block-coordinate optimization is widely used in machine learning. Across deep learning, federated learning, large model training and fine-tuning, and many other domains. High-dimensional parameters $\vtheta \in \RR^d$ are typically partitioned into blocks $(\vtheta_1, \ldots, \vtheta_N)$ that are updated iteratively. 
A fundamental question arises: should blocks update simultaneously or sequentially? 
This choice gives rise to two classical strategies. 
In \emph{Jacobi} iteration, all blocks update in parallel using gradients evaluated at the current shared state, which maximizes parallelism but relies on outdated information as other blocks change. 
In \emph{Gauss-Seidel} iteration, blocks update sequentially, with each block incorporating the latest values from previously updated blocks; this approach sacrifices parallelism but benefits from fresher gradient information. 
The choice between these strategies, or hybrid schemes mixing both, profoundly affects convergence, yet remains guided by intuition and empirical experience rather than principled theory.

\begin{figure}[t]
\centering
\includegraphics[width=\textwidth]{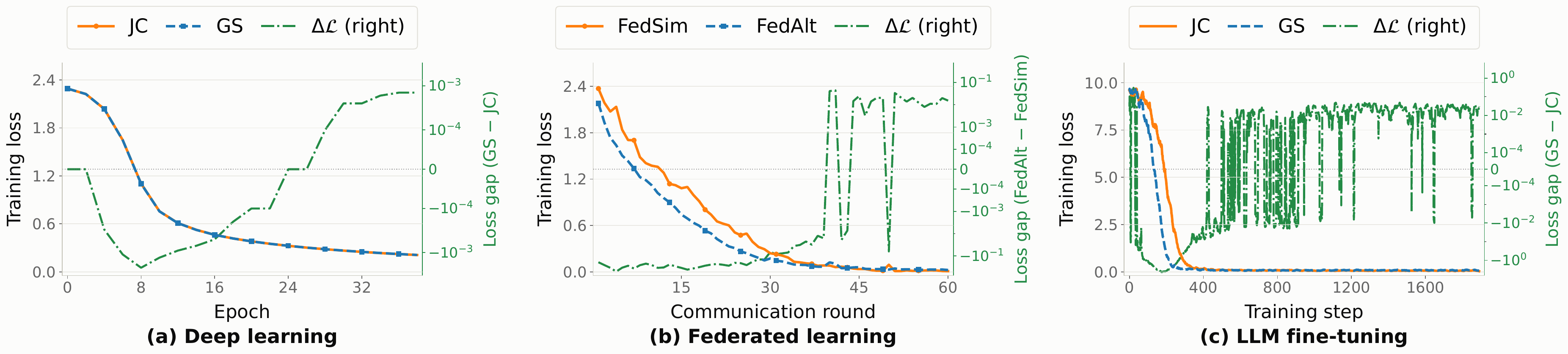}
\vspace{-1em}
\caption{Training loss for (a) an 8-layer MLP on MNIST, (b) federated ResNet-18 on CIFAR-10, and (c) T5-base LoRA fine-tuning on SST-2.
Blue and orange curves denote sequential/alternating (GS, FedAlt) and simultaneous (JC, FedSim) updates, respectively.
Green dash-dotted curves (right axes) show $\Delta\cL=\cL_{\mathrm{GS}}-\cL_{\mathrm{JC}}$ (FedAlt minus FedSim in panel~(b)).}
\label{fig:motivation}
\end{figure}

Despite the widespread use of block-coordinate methods, \emph{we lack a unified theory predicting which update scheme performs better under what conditions}. Figure~\ref{fig:motivation} illustrates this challenge in three settings (see Details in Appendix \ref{app:intro_methods}): (a) an 8-layer MLP on MNIST with layer-wise simultaneous versus GS-like in-place backward updates; (b) federated ResNet-18 on CIFAR-10 with FedAlt (alternating personal/shared updates, GS-like) versus FedSim (simultaneous updates, JC-like); and (c) T5-base LoRA fine-tuning on SST-2 with alternating versus simultaneous updates of the $A$ and $B$ low-rank factors. Across these settings, the recorded training losses favor sequential updates early in training and simultaneous updates later. This reversal motivates a criterion for predicting when and under what conditions the relative advantage changes.

Classical results from numerical linear algebra \citep{young1971iterative,saad2003iterative} analyze Gauss-Seidel versus Jacobi for solving linear systems $Ax = b$, proving that Gauss-Seidel converges faster for symmetric positive definite matrices. However, these results do not extend to nonlinear objectives $\min_{\vtheta} \cL(\vtheta)$, nor do they predict the instance-specific performance gaps visible in Figure~\ref{fig:motivation}. Modern optimization theory \citep{nesterov2012efficiency,attouch2013convergence, beck2013convergence, wright2015coordinate, wang2019global} establishes convergence rates for various block-coordinate descent variants, but focuses on asymptotic guarantees that obscure the fine-grained question: \emph{for a given problem at a given iterate, which update scheme makes more progress, and by how much?} Asynchronous optimization \citep{lian2015asynchronous} and federated learning \citep{pillutla2022fedalt} literature documents empirical benefits of fresher parameter values but lacks geometric characterizations of when staleness helps versus hurts. The gap between classical linear-system theory and modern nonlinear optimization leaves practitioners without actionable guidance for choosing update schemes in applications.

We develop a curvature-based theory of \emph{iteration selection}: given an objective, a block partition, and an iteration budget, which update scheme should be used to obtain a lower objective value? Our framework compares simultaneous, sequential, and partially sequential block-gradient updates through the interactions between block gradients and cross-block curvature. The resulting criterion is state-dependent: the preferred scheme can change as optimization progresses. Over multiple iterations, the loss gap also depends on how earlier updates have changed the methods' parameter trajectories. We account for this history through trajectory and loss recursions, yielding a cumulative comparison after a prescribed number of iterations.
Our contributions are summarized as follows:
\begin{itemize}
\item We derive a unified one-step loss comparison for acyclic block-gradient schemes (Theorem~\ref{thm:master}). From a common starting point, the GS--JC gap is $\frac{\eta^2}{2}\cC_{\mathrm{cross}}+\cO(\eta^3)$, where $\cC_{\mathrm{cross}}=2\sum_{i<j}\vg_i^\top\mH_{ij}\vg_j$. The sign of cross-curvature identifies the lower-loss one-step update whenever the leading term dominates the remainder.

\item We derive trajectory and loss recursions with $\cO(\eta^3)$ local remainders (Lemma~\ref{lem:multistep_recursion}; Theorem~\ref{thm:loss_recursion}). Under the stated regularity conditions and $\eta K\le T$ for fixed $T$, their propagation gives a signed terminal loss comparison with $\cO(K\eta^3)$ error (Theorem~\ref{thm:cumulative}), identifying the lower-loss method when the estimated gap magnitude exceeds the remainder bound.

\item We evaluate recursive and cumulative estimates using observed trajectory derivatives in DNN, federated learning, and LoRA experiments (Section~\ref{sec:experiments}). Over 500 iterations per setting, cumulative winner agreement is $98.0\%$, $83.4\%$, and $97.6\%$, respectively. The recursive estimate improves winner agreement and loss-gap accuracy in all three settings.
\end{itemize}

\section{Related Work}

\paragraph{Coordinate descent methods.}
Coordinate descent exploits a block partition to reduce the cost of
individual updates. Its theory covers cyclic, randomized, and greedy
block selection, as well as smooth and composite objectives
\citep{wright2015coordinate}.
\citet{tseng2001convergence} establishes convergence results for
nondifferentiable block-coordinate minimization, while
\citet{nesterov2012efficiency,richtarik2014iteration} derive
non-asymptotic complexity bounds for randomized coordinate methods.
Block successive-minimization frameworks further accommodate inexact
updates through local surrogate subproblems
\citep{razaviyayn2013unified}.
These results characterize how objective structure, block selection,
and subproblem accuracy affect convergence.

\paragraph{Theory of iterative methods.}
Classical Jacobi and Gauss--Seidel analyses use matrix splittings and
iteration spectra to study stability and convergence rates
\citep{young1971iterative,saad2003iterative}.
Comparisons depend on the matrix structure and the update definition;
exact block solves and block-gradient steps induce different iteration
operators.
Asynchronous optimization extends convergence analysis to delayed or
inconsistent information, relating admissible parallelism to sparsity,
smoothness, and delay assumptions
\citep{recht2011hogwild,lian2015asynchronous}.
Beyond convergence bounds, \citet{rosca2021drift} use backward error
analysis to characterize the distinct discretization effects of
simultaneous and alternating updates in two-player games.
Initialization-based approximations offer another view of training
dynamics: \citet{lee2019wide} show that, in the infinite-width limit,
neural-network evolution is governed by a linearization of the network
output around initialization.

\paragraph{Applications in machine learning.}
Coupled parameter blocks arise throughout learning.
Back-propagation computes gradients for layer-structured neural networks
\citep{lecun1988framework}.
In federated learning \citep{mcmahan2017communication}, shared representations interact with
client-specific parameters: \citet{pillutla2022fedalt} analyze
simultaneous FedSim and alternating FedAlt updates under nonconvex
objectives and partial participation, identifying regimes favorable to
alternation.
Low-rank adaptation introduces a different coupling through
$\Delta W=BA$ \citep{hu2021lora}.
LoRA+ studies unequal learning rates for the two factors
\citep{hayou2024loraplus}, whereas LoRA-E$^2$ combines a modified
initialization with $B$-first GS warm-up followed by simultaneous
updates \citep{zhu2026lorae2}.
These applications illustrate how parameter coupling motivates choices
of update order, information freshness, and factor-specific step sizes.
Beyond machine learning, the theory of iterative methods also plays a central role in optimization \citep{richtarik2016parallel, xu2017globally},
signal processing \citep{kerahroodi2017coordinate}, and computational imaging \citep{wang2008new, xu2013block}.

Taken together, these studies establish convergence guarantees,
characterize iterative dynamics, and develop application-specific
update rules. Our focus is the complementary problem of
\emph{instance-specific iteration selection}: comparing the signed
terminal losses of fixed JC and GS methods at a prescribed budget.
For deterministic block-gradient updates, cross-curvature yields a
one-step comparison with $\cO(\eta^3)$ remainder, and signed trajectory
propagation extends the comparison to a prescribed number of iterations.
This distinguishes local update preference from comparison after multiple iterations without asserting universal superiority of either scheme.

\section{Preliminaries}


\subsection{Block-Coordinate Optimization}
We use bold lowercase letters for vectors (e.g., $\vtheta, \vg$) and bold uppercase letters for matrices (e.g., $\mH, \mA$). For vectors $\vx \in \RR^d$, $\|\vx\|$ denotes the Euclidean norm. For matrices $\mA \in \RR^{m \times n}$, $\|\mA\|$ denotes the spectral norm (largest singular value). We write $[N] = \{1, 2, \ldots, N\}$ for index sets. 
We consider the optimization problem $\min_{\vtheta \in \RR^d} \cL(\vtheta)$ where $\cL: \RR^d \to \RR$ is twice continuously differentiable with a locally Lipschitz Hessian on a neighborhood containing the updates and their intermediate states. The parameter vector $\vtheta$ is partitioned into $N$ blocks: $\vtheta = (\vtheta_1, \ldots, \vtheta_N)$ with $\vtheta_i \in \RR^{d_i}$ and $\sum_{i=1}^N d_i = d$. At iterate $\vtheta^k$, we denote the gradient $\vg = \nabla \cL(\vtheta^k) \in \RR^d$ with block components $\vg_i = \nabla_{\vtheta_i} \cL(\vtheta^k) \in \RR^{d_i}$, and the Hessian $\mH = \nabla^2 \cL(\vtheta^k) \in \RR^{d \times d}$ with blocks $\mH_{ij} = \nabla_{\vtheta_i, \vtheta_j}^2 \cL(\vtheta^k) \in \RR^{d_i \times d_j}$ satisfying $\mH_{ij} = \mH_{ji}^\top$ by Schwarz's theorem. We decompose the Hessian as $\mH = \mD + \mL + \mL^\top$ where $\mD = \diag(\mH_{11}, \ldots, \mH_{NN})$ is block-diagonal and $\mL$ is strictly lower triangular with $(\mL)_{ij} = \mH_{ij}$ for $i > j$ and zero otherwise.

\begin{definition}[Jacobi and Gauss-Seidel Updates]
\label{def:updates}
Given learning rate $\eta > 0$ and current iterate $\vtheta^k$, we define two fundamental update schemes:
\begin{itemize}
\item \textbf{Jacobi (JC):} All blocks update in parallel using gradients computed at $\vtheta^k$: $\vtheta_i^{k+1} = \vtheta_i^k - \eta \vg_i^k$ for all $i \in [N]$, where $\vg_i^k = \nabla_{\vtheta_i} \cL(\vtheta^k)$. All blocks use the same parameter state, enabling parallel computation.

\item \textbf{Gauss-Seidel (GS):} Blocks update sequentially in order $i = 1, 2, \ldots, N$, each using the freshest available values: $\vtheta_i^{k+1} = \vtheta_i^k - \eta \vtg_i^k$, where the fresh gradient $\vtg_i^k = \nabla_{\vtheta_i} \cL(\tilde{\vtheta}^{(i)})$ is computed at the partially updated state $\tilde{\vtheta}^{(i)} = (\vtheta_1^{k+1}, \ldots, \vtheta_{i-1}^{k+1}, \vtheta_i^k, \ldots, \vtheta_N^k)$. Block $i$ uses new values $\vtheta_j^{k+1}$ for all $j < i$, creating sequential dependencies.
\end{itemize}
\end{definition}

\begin{remark}[Geometric interpretation]
The difference between JC and GS is the parameter state used to evaluate each block gradient. For GS in the order $1,\ldots,N$, a first-order Taylor expansion gives
\begin{align*}
\vtg_i^k
= \vg_i^k + \sum_{j<i}\mH_{ij}(\vtheta_j^{k+1}-\vtheta_j^k) + \cO(\eta^2)
= \vg_i^k - \eta\sum_{j<i}\mH_{ij}\vg_j^k + \cO(\eta^2).
\end{align*}
Thus, each block update $-\eta\vg_j^k$ changes the gradient of block $i$ through $\mH_{ij}$. In vector form, the GS gradient is $\vtg^k = \vg^k - \eta\mL\vg^k + \cO(\eta^2)$, whereas JC uses $\vg^k$ without this correction.
\end{remark}

\subsection{General Update Schemes via Delay Patterns}

Jacobi and Gauss-Seidel represent two extreme strategies: JC uses completely outdated parameters (all blocks see $\vtheta^k$), while full GS uses maximally fresh parameters (block $i$ sees all updates $\vtheta_j^{k+1}$ for $j < i$). However, practical distributed systems often operate between these extremes: some blocks may share fresh values while others remain outdated due to communication delays, asynchronous execution, or deliberate scheduling. To analyze such hybrid schemes that encompass JC, GS, and everything in between, we introduce the delay pattern.
\begin{definition}[Delay Pattern]
\label{def:staleness}
A \emph{delay pattern} is a set $\cS \subseteq \{(i,j) : i,j \in [N]\}$, where $(i,j)\in\cS$ means that block $i$ uses the fresh value $\vtheta_j^{k+1}$ rather than $\vtheta_j^k$ when computing its update.
We call $\cS$ \emph{acyclic} if the blocks can be ordered so that, for every $(i,j)\in\cS$, block $j$ is updated before block $i$.
\end{definition}

To illustrate the expressiveness of this formalism, consider $N = 3$ blocks: Jacobi corresponds to $\cS = \emptyset$ (no fresh values), while full GS in order $1 \to 2 \to 3$ gives $\cS = \{(2,1), (3,1), (3,2)\}$ (each block uses all preceding updates). Between these extremes lie hybrid schemes such as partial GS with $\cS = \{(2,1)\}$ (only block 2 uses fresh $\vtheta_1$) or a chain pattern $\cS = \{(2,1), (3,2)\}$ (block 2 uses fresh $\vtheta_1$, block 3 uses fresh $\vtheta_2$). The key insight is that any acyclic pattern $\cS$ admits a first-order gradient correction, formalized in the following lemma.

\begin{lemma}[Gradient Approximation]
\label{lem:staleness}
For acyclic $\cS$ and small $\eta$, the fresh gradient satisfies:
$$\vtg_i = \vg_i - \eta \sum_{j : (i,j) \in \cS} \mH_{ij} \vg_j + \cO(\eta^2).$$
In vector form, $\vtg^\cS = \vg - \eta \mM_\cS \vg + \cO(\eta^2)$, where $(\mM_\cS)_{ij} = \mH_{ij}$ if $(i,j) \in \cS$ and zero otherwise.
\end{lemma}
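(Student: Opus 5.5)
We argue by induction along a topological order of the acyclic pattern $\cS$. Acyclicity gives an ordering $\pi(1),\ldots,\pi(N)$ of the blocks such that $(i,j)\in\cS$ implies $j$ precedes $i$. Block $i$ evaluates its gradient at the state $\tilde{\vtheta}^{(i)}$. In this state, coordinate block $j$ equals $\vtheta_j^{k+1}$ if $(i,j)\in\cS$ and $\vtheta_j^k$ otherwise. Because every block $j$ with $(i,j)\in\cS$ is updated before $i$, the state is well defined when block $i$ is processed. We then write $\tilde{\vtheta}^{(i)}=\vtheta^k+\bm{\delta}^{(i)}$, where $\bm{\delta}^{(i)}$ is supported on the blocks $\{j:(i,j)\in\cS\}$ and has block components $\vtheta_j^{k+1}-\vtheta_j^k=-\eta\vtg_j$.

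The first step is an a priori bound: $\|\vtg_j\|=\cO(1)$ for every $j$, uniformly for small $\eta$, so that $\|\bm{\delta}^{(i)}\|=\cO(\eta)$. We prove this by induction along the order. The first block in the order has no fresh dependencies, so its gradient is $\vg_j$. Suppose every predecessor satisfies $\|\vtg_j\|\le C$. Then $\tilde{\vtheta}^{(i)}$ lies within distance $\cO(\eta)$ of $\vtheta^k$, hence inside the neighborhood $U$ on which $\cL$ is $C^2$ with Lipschitz Hessian. Continuity of $\nabla\cL$ on a compact ball around $\vtheta^k$ then bounds $\|\vtg_i\|$. The needed smallness of $\eta$ depends only on the radius of $U$ and on finitely many constants, since $N$ is finite.

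The second step is a first-order Taylor expansion of $\nabla_{\vtheta_i}\cL$ around $\vtheta^k$, with integral remainder:
\[
\vtg_i=\vg_i+\sum_{j:(i,j)\in\cS}\mH_{ij}\bigl(\vtheta_j^{k+1}-\vtheta_j^k\bigr)+\bm{r}_i,\qquad \|\bm{r}_i\|\le \tfrac{\rho}{2}\|\bm{\delta}^{(i)}\|^2=\cO(\eta^2),
\]
where $\rho$ is the local Lipschitz constant of the Hessian on $U$. Substituting $\vtheta_j^{k+1}-\vtheta_j^k=-\eta\vtg_j$ gives $\vtg_i=\vg_i-\eta\sum_{(i,j)\in\cS}\mH_{ij}\vtg_j+\cO(\eta^2)$.

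The third step is again induction along the order. It gives $\vtg_j=\vg_j+\cO(\eta)$, either from the same expansion truncated at zeroth order or from the display above together with the first step. Hence $\eta\mH_{ij}\vtg_j=\eta\mH_{ij}\vg_j+\cO(\eta^2)$, which yields the blockwise claim. Stacking the blocks gives the vector form $\vtg^\cS=\vg-\eta\mM_\cS\vg+\cO(\eta^2)$, with $(\mM_\cS)_{ij}=\mH_{ij}$ exactly for $(i,j)\in\cS$.

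The main obstacle is the first step: guaranteeing that all intermediate states remain in the region where the Hessian is Lipschitz, so that the remainder bound holds uniformly. The hidden constants depend on the local bounds on $\|\mH\|$ and $\rho$ and on the depth of the dependency chain. Finiteness of $N$ and a sufficiently small $\eta$ handle this.
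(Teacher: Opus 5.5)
Your proposal is correct and follows the same route as the paper: a first-order Taylor expansion of $\nabla_{\vtheta_i}\cL$ about $\vtheta^k$, in which only the blocks $j$ with $(i,j)\in\cS$ are displaced, followed by substituting the displacement $-\eta\vg_j+\cO(\eta^2)$. Your induction along the topological order (the a priori bound keeping every intermediate state near $\vtheta^k$, then $\vtg_j=\vg_j+\cO(\eta)$) makes explicit what the paper's one-line proof takes for granted when it asserts that each already-updated block moves by $-\eta\vg_j+\cO(\eta^2)$.
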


This lemma separates the update schedule from the objective geometry:
$\cS$ selects the fresh-value dependencies, while the retained Hessian determine how those updates change each block gradient. JC has
$\mM_\cS=0$, full GS in the order $1,\ldots,N$ has $\mM_\cS=\mL$.
The proof is given in Appendix~\ref{app:gradient_delay}.

\section{The Main Theorem}
\label{sec:main_theorem}
All proofs and intermediate derivations are collected in Appendices~\ref{app:single_step}--\ref{app:cumulative}. The main text states the results and their interpretation.

\subsection{Main Result}

We now leverage the delay pattern formalism and the gradient approximation from Lemma \ref{lem:staleness} to derive a unified second-order expansion that applies to any acyclic update scheme. This expansion reveals how the delay pattern $\cS$ induces an extra curvature term $\cC_\cS$ beyond the standard descent, quantifying the performance gap between different update strategies.

\begin{theorem}[Main Theorem]
\label{thm:master}
Let $\cS$ be acyclic, $\eta > 0$ sufficiently small, and consider both update schemes starting from the same iterate $\vtheta^k$ with gradient $\vg = \nabla \cL(\vtheta^k)$ and Hessian $\mH = \nabla^2 \cL(\vtheta^k)$. Then after one step:
\begin{align}
\cL_{\text{JC}}^{k+1} &= \cL^k - \eta \|\vg\|^2 + \frac{\eta^2}{2} \vg^\top \mH \vg + \cO(\eta^3), \\
\cL_{\cS}^{k+1} &= \cL^k - \eta \|\vg\|^2 + \frac{\eta^2}{2} \vg^\top \mH \vg + \eta^2 \cC_\cS + \cO(\eta^3),
\end{align}
where $\cC_\cS = \vg^\top \mM_\cS \vg = \sum_{(i,j) \in \cS} c_{ij}$ and $c_{ij} = \vg_i^\top \mH_{ij} \vg_j$. Therefore:
$$\boxed{\cL_\cS^{k+1} - \cL_{\text{JC}}^{k+1} = \eta^2 \cC_\cS + \cO(\eta^3).}$$
\end{theorem}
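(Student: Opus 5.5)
We prove both expansions by a second-order Taylor expansion of $\cL$ around the common iterate $\vtheta^k$, with the integral form of the remainder. The local Lipschitz continuity of the Hessian, assumed on a neighborhood containing all intermediate states, then turns the third-order remainder into an explicit $\cO(\|\Delta\|^3)$ bound. The only input that distinguishes the two schemes is the displacement $\Delta=\vtheta^{k+1}-\vtheta^k$. For JC, $\Delta_{\mathrm{JC}}=-\eta\vg$ exactly. For an acyclic pattern $\cS$, Lemma~\ref{lem:staleness} gives $\Delta_\cS=-\eta\vtg^\cS=-\eta\vg+\eta^2\mM_\cS\vg+\cO(\eta^3)$.

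\textbf{Step 1 (JC).} Substitute $\Delta_{\mathrm{JC}}$ into
$$\cL(\vtheta^k+\Delta)=\cL^k+\vg^\top\Delta+\tfrac12\Delta^\top\mH\Delta+R(\Delta),\qquad |R(\Delta)|\le \tfrac{L_H}{6}\|\Delta\|^3 .$$
This gives $\cL^k-\eta\|\vg\|^2+\frac{\eta^2}{2}\vg^\top\mH\vg+\cO(\eta^3)$, since $\|\Delta_{\mathrm{JC}}\|=\eta\|\vg\|$.

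\textbf{Step 2 (general $\cS$).} Write $\Delta_\cS=-\eta\vg+\eta^2\mM_\cS\vg+\vr$ with $\|\vr\|=\cO(\eta^3)$.
\begin{itemize}
\item The linear term is $\vg^\top\Delta_\cS=-\eta\|\vg\|^2+\eta^2\vg^\top\mM_\cS\vg+\cO(\eta^3)$.
\item The quadratic term is $\frac12\Delta_\cS^\top\mH\Delta_\cS=\frac{\eta^2}{2}\vg^\top\mH\vg+\cO(\eta^3)$. The cross terms between $-\eta\vg$ and $\eta^2\mM_\cS\vg$ are already $\cO(\eta^3)$, and $\|\mM_\cS\|\le\|\mH\|$ up to a constant depending only on $N$.
\item The remainder is $\cO(\|\Delta_\cS\|^3)=\cO(\eta^3)$.
\end{itemize}
Finally, unpacking $\vg^\top\mM_\cS\vg=\sum_{(i,j)\in\cS}\vg_i^\top\mH_{ij}\vg_j$ identifies $\cC_\cS$. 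Subtracting the two expansions yields the boxed identity, and the common terms cancel exactly.

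\textbf{Main obstacle.} The substantive point is making sure Lemma~\ref{lem:staleness} really delivers an $\cO(\eta^2)$ error in $\vtg^\cS$, uniformly in the iterate, since this error is multiplied by $\eta$ and becomes the $\cO(\eta^3)$ term. If the lemma's proof is not taken as given, I would redo it by induction along a topological order of the acyclic $\cS$.
\begin{enumerate}
\item Each state at which block $i$'s gradient is evaluated differs from $\vtheta^k$ only in blocks $j$ already updated, by $-\eta\vtg_j=\cO(\eta)$. Hence all evaluation points stay in a ball of radius $\cO(\eta)$ inside the Lipschitz-Hessian neighborhood, for $\eta$ small.
\item Taylor-expand $\nabla_{\vtheta_i}\cL$ to first order with remainder $\frac{L_H}{2}\|\cdot\|^2=\cO(\eta^2)$.
\item Replace $\vtg_j$ by $\vg_j+\cO(\eta)$ inside the $\mH_{ij}$ term.
\end{enumerate}
The constants in $\cO(\cdot)$ depend on $\|\vg\|$, $\|\mH\|$, $L_H$ and $N$, which is all that "sufficiently small $\eta$" needs to absorb.
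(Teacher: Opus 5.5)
Your proposal is correct and follows the paper's proof essentially step for step. Both expand $\cL$ to second order about $\vtheta^k$, use the exact JC displacement $-\eta\vg$, and take the $\cS$-displacement $-\eta\vg+\eta^2\mM_\cS\vg+\cO(\eta^3)$ from Lemma~\ref{lem:staleness}; the linear term then yields $\eta^2\cC_\cS$, and the quadratic cross pieces are absorbed into $\cO(\eta^3)$. Your explicit $\tfrac{L_H}{6}\|\Delta\|^3$ remainder and the induction along a topological order for the lemma only make precise what the paper's appendix leaves implicit.
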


The theorem isolates the effect of the update scheme: all schemes share
the same first-order decrease, while the use of fresh block values
contributes the second-order correction $\eta^2\cC_\cS$.
Each term $c_{ij}$ measures the signed interaction between two block
gradients through their cross-Hessian block.
Consequently, $\cC_\cS<0$ favors scheme $\cS$ over JC, whereas
$\cC_\cS>0$ favors JC, provided the leading term dominates the remainder.
This gives a local selection criterion for one complete block sweep
from a common starting point.
The proof is given in Appendix~\ref{app:main_theorem}.

For full GS, let $\cS_{\mathrm{GS}}=\{(i,j):i>j\}$ and define the cross-curvature by
\begin{equation}
\cC_{\text{cross}}
:=2\sum_{i<j}\vg_i^\top\mH_{ij}\vg_j
=\vg^\top(\mH-\mD)\vg.
\label{eq:cross_curvature}
\end{equation}
The following corollary states the resulting comparison with JC; its derivation and the update-order argument are given in Appendix~\ref{app:gs_criterion}.

\begin{corollary}[GS vs. JC Criterion]
\label{cor:criterion}
Applying the Main Theorem to the specific case of full Gauss-Seidel versus Jacobi, we obtain:
$$\cL_{\text{GS}}^{k+1} - \cL_{\text{JC}}^{k+1} = \frac{\eta^2}{2} \cC_{\text{cross}} + \cO(\eta^3).$$
When the displayed leading term dominates the remainder, this gives the following sign criterion:
\begin{itemize}
\item $\cC_{\text{cross}} < 0 \implies$ \textbf{GS wins} (achieves lower loss than JC). Negative cross-curvature indicates that off-diagonal Hessian blocks couple the gradients cooperatively, meaning fresh parameter updates in earlier blocks help reduce the loss in later blocks.
\item $\cC_{\text{cross}} > 0 \implies$ \textbf{JC wins} (achieves lower loss than GS). Positive cross-curvature indicates that block gradients compete through off-diagonal couplings, and using fresh updates actually increases the loss compared to using stale but consistent gradients.
\item $\cC_{\text{cross}} = 0 \implies$ \textbf{No difference} to $\cO(\eta^2)$. The two methods are equivalent at the leading order, with differences appearing only at $\cO(\eta^3)$ or higher.
\end{itemize}
\end{corollary}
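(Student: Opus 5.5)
The plan is to specialize Theorem~\ref{thm:master} to the full Gauss--Seidel pattern $\cS_{\mathrm{GS}}=\{(i,j):i>j\}$ and then rewrite the resulting curvature term as $\cC_{\text{cross}}$. First, $\cS_{\mathrm{GS}}$ is acyclic: the natural order $1,\ldots,N$ updates block $j$ before block $i$ whenever $(i,j)\in\cS_{\mathrm{GS}}$. By Definition~\ref{def:updates}, block $i$ then uses $\vtheta_j^{k+1}$ exactly for $j<i$, so the GS iterate coincides with the $\cS_{\mathrm{GS}}$-scheme iterate. Consequently Lemma~\ref{lem:staleness} applies with $\mM_{\cS_{\mathrm{GS}}}=\mL$, and Theorem~\ref{thm:master} gives
\[
\cL_{\text{GS}}^{k+1}-\cL_{\text{JC}}^{k+1}=\eta^2\,\vg^\top\mL\vg+\cO(\eta^3)=\eta^2\sum_{i>j}\vg_i^\top\mH_{ij}\vg_j+\cO(\eta^3).
\]

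The second step rewrites $\vg^\top\mL\vg$ as half the cross-curvature. Because it is a scalar, $\vg^\top\mL\vg=\vg^\top\mL^\top\vg$. Using the decomposition $\mH=\mD+\mL+\mL^\top$, this gives $2\,\vg^\top\mL\vg=\vg^\top(\mH-\mD)\vg$. Blockwise, the same identity follows from $\vg_i^\top\mH_{ij}\vg_j=\vg_j^\top\mH_{ji}\vg_i$, which holds since $\mH_{ij}=\mH_{ji}^\top$. This relabels the sum over $i>j$ as the sum over $i<j$, and matching against \eqref{eq:cross_curvature} yields $\vg^\top\mL\vg=\tfrac12\cC_{\text{cross}}$. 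Substituting gives the displayed identity $\cL_{\text{GS}}^{k+1}-\cL_{\text{JC}}^{k+1}=\frac{\eta^2}{2}\cC_{\text{cross}}+\cO(\eta^3)$.

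The third step derives the sign criterion. Write the remainder as $R(\eta)$ with $|R(\eta)|\le C\eta^3$, where $C$ depends on a local Lipschitz constant of the Hessian and on bounds for $\|\vg\|$ and $\|\mH\|$. The hypothesis that the leading term dominates is then $\frac{\eta^2}{2}|\cC_{\text{cross}}|>C\eta^3$; it holds in particular for all $\eta<|\cC_{\text{cross}}|/(2C)$ when $\cC_{\text{cross}}\neq0$. Under this hypothesis the gap has the sign of $\cC_{\text{cross}}$. A negative sign means GS attains the lower loss, and a positive sign means JC does. If $\cC_{\text{cross}}=0$, the gap is $\cO(\eta^3)$, so the two methods agree through second order. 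The interpretive phrases in the corollary (cooperative versus competing couplings) simply read off what the sign of $\sum_{i<j}\vg_i^\top\mH_{ij}\vg_j$ means and need no further argument.

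The main obstacle is the first step: confirming that the GS iterate of Definition~\ref{def:updates} really is the $\cS_{\mathrm{GS}}$-scheme, so that the $\cO(\eta^3)$ remainder of Theorem~\ref{thm:master} carries over unchanged. In particular, the Taylor expansions must stay inside the neighborhood where the Hessian is Lipschitz, which holds for small $\eta$ because every intermediate state lies within $\cO(\eta)$ of $\vtheta^k$. The rest is bookkeeping with the symmetric splitting of $\mH$. I would also note that the update order matters only through which pairs lie in $\cS$. Since $\cC_{\text{cross}}$ is symmetric in the pairs, any full sequential order yields the same leading-order gap.
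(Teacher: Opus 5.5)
Your proposal is correct and follows the paper's proof essentially step for step. You specialize Theorem~\ref{thm:master} to $\cS_{\mathrm{GS}}=\{(i,j):i>j\}$, so that $\mM_{\cS_{\mathrm{GS}}}=\mL$, and use $\mH_{ij}=\mH_{ji}^\top$ to get $\cC_{\cS_{\mathrm{GS}}}=\tfrac12\cC_{\mathrm{cross}}$. You then read off the sign for sufficiently small $\eta$, and note that any full sweep order contains exactly one orientation of each pair. Your explicit threshold $\eta<|\cC_{\mathrm{cross}}|/(2C)$ and your check that the GS iterate of Definition~\ref{def:updates} is the $\cS_{\mathrm{GS}}$-scheme only spell out in more detail what the paper states.
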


\subsection{Multi-Step Dynamics}
At the common initialization $\vtheta^0$, the one-step comparison is
$\cL_{\text{GS}}^1 - \cL_{\text{JC}}^1 = \eta^2 \cC_{\cS_{\mathrm{GS}}}(\vtheta^0) + \cO(\eta^3).$
After step 0, the trajectories diverge: $\vtheta_{\text{GS}}^k \neq \vtheta_{\text{JC}}^k$. The Main Theorem no longer applies because it requires both methods to start from the same point. Instead, we track the loss difference through the coupled pair of \eqrefs{eq:traj_evolution}{eq:loss_evolution} stated in Lemma~\ref{lem:multistep_recursion} and Theorem~\ref{thm:loss_recursion}, in which the trajectory difference is the state variable and the loss difference is read off from it at each step (see Appendix~\ref{app:trajectory_recursion}).

\subsubsection{Standing assumptions and notation}

To compare the two methods over $K$ iterations, we need to control how their iterates separate and how approximation errors accumulate. The following assumptions bound the gradient, Hessian, and Hessian variation throughout both sequences of iterates and the intermediate block updates. We assume $\eta K\le T$, where $T>0$ is fixed independently of $\eta$ and $K$.

\begin{assumption}
\label{ass:regularity}
There is an open convex set $\Omega$ containing all block-intermediate states, both trajectories $\{\vtheta_\cS^k\}_{k \le K}$ and $\{\vtheta_{\text{JC}}^k\}_{k \le K}$ together with the segments joining $\vtheta_{\text{JC}}^k$ to $\vtheta_\cS^k$, on which $\cL$ is three times continuously differentiable with
\begin{enumerate}
\item[\textbf{(A1)}] $\|\nabla\cL(\vtheta)\| \le G$ \hfill (bounded gradients)
\item[\textbf{(A2)}] $\|\nabla^2\cL(\vtheta)\| \le L$ \hfill ($L$-smoothness)
\item[\textbf{(A3)}] $\|\nabla^2\cL(\vtheta) - \nabla^2\cL(\vtheta')\| \le \rho\,\|\vtheta - \vtheta'\|$ \hfill (Lipschitz Hessian)
\end{enumerate}
for all $\vtheta, \vtheta' \in \Omega$. We write $\eta K \le T$ with $T = \cO(1)$.
\end{assumption}

At iteration $k$, the differences in parameters and loss between the two methods are
\[
\Delta\vtheta^k:=\vtheta_\cS^k-\vtheta_{\mathrm{JC}}^k,
\qquad
\Delta\cL^k:=\cL(\vtheta_\cS^k)-\cL(\vtheta_{\mathrm{JC}}^k).
\]
The subscript specifies the method and the superscript specifies the iteration.
For example, $\vg_\cS^k=\nabla\cL(\vtheta_\cS^k)$ and $\mH_\cS^k=\nabla^2\cL(\vtheta_\cS^k)$.
The matrix $\mM_\cS$ uses Hessian blocks evaluated at $\vtheta_\cS^k$, although we omit $k$ from its notation.
We set $L_\cS:=|\cS|L$, where $|\cS|$ is the number of pairs in $\cS$, so that $\|\mM_\cS\|\le L_\cS$.
The constants in the $\cO(\cdot)$ bounds may depend on $G,L,\rho,N,T$, but are independent of the step size $\eta$ and iteration index $k$.
The supporting bounds are given in Appendix~\ref{app:auxiliary_bounds}.

\subsubsection{Trajectory and loss recursions}

Comparing JC and scheme $\cS$ over multiple iterations requires accounting for the different parameter values they produce.
The comparison must include both the parameter difference from previous iterations and the effect of fresh block updates at the current iteration.
We bound $\Delta\vtheta^k$ to control approximation errors, then derive how this difference evolves and how it affects the loss difference.

\begin{lemma}[Size of the trajectory difference]
\label{lem:traj_size}
Under Assumption~\ref{ass:regularity}, with $\Delta\vtheta^0 = 0$ and $\eta k \le T$,
\begin{equation}
\|\Delta\vtheta^k\| \;\le\; \eta\,c_1\,\frac{e^{\eta k L} - 1}{L} \;\le\; \eta\,\frac{c_1(e^{TL} - 1)}{L} \;=\; \cO(\eta), \qquad c_1 := L_\cS G + \eta C_\varepsilon, \label{eq:traj_size}
\end{equation}
where $C_\varepsilon$ bounds the $\cO(\eta^3)$ remainder of Lemma~\ref{lem:staleness}.
\end{lemma}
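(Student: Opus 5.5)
The plan is to write a one-step recursion for $\Delta\vtheta^k$ and close it with a discrete Grönwall argument.

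\textbf{Step 1: the two updates.} Write the two updates from their respective iterates:
\[
\vtheta_{\mathrm{JC}}^{k+1}=\vtheta_{\mathrm{JC}}^k-\eta\,\vg_{\mathrm{JC}}^k,
\qquad
\vtheta_\cS^{k+1}=\vtheta_\cS^k-\eta\,\vtg^{\cS,k}.
\]
By Lemma~\ref{lem:staleness}, applied at $\vtheta_\cS^k$, we have $\vtg^{\cS,k}=\vg_\cS^k-\eta\mM_\cS\vg_\cS^k+\vr^k$. Under Assumption~\ref{ass:regularity}, the remainder satisfies $\|\vr^k\|\le \eta^2 C_\varepsilon$. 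This follows from the integral form of Taylor's theorem applied to the block-intermediate states, which lie in $\Omega$: the intermediate displacements are $\cO(\eta G)$, so (A3) bounds the second-order error by $\rho\,\cO(\eta^2 G^2)$.

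With the statement's convention, $\eta^3$ enters $\Delta\vtheta$ after multiplication by $\eta$. I therefore read the remainder of $\eta\vtg$ as $\cO(\eta^3)$, consistent with $c_1=L_\cS G+\eta C_\varepsilon$.

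\textbf{Step 2: subtract.} Subtracting the two updates gives
\[
\Delta\vtheta^{k+1}=\Delta\vtheta^k-\eta\bigl(\vg_\cS^k-\vg_{\mathrm{JC}}^k\bigr)+\eta^2\mM_\cS\vg_\cS^k-\eta\vr^k .
\]

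\textbf{Step 3: bound each piece.} I bound the three correction terms separately.

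\begin{itemize}
\item \emph{Gradient difference.} Because $\Omega$ is convex and contains the segment from $\vtheta_{\mathrm{JC}}^k$ to $\vtheta_\cS^k$, the mean value theorem in integral form together with (A2) gives $\|\vg_\cS^k-\vg_{\mathrm{JC}}^k\|\le L\|\Delta\vtheta^k\|$.
\item \emph{Correction term.} Since $\|\mM_\cS\|\le L_\cS$ and $\|\vg_\cS^k\|\le G$, we get $\|\eta^2\mM_\cS\vg_\cS^k\|\le \eta^2 L_\cS G$.
\item \emph{Remainder.} The remainder contributes at most $\eta^3 C_\varepsilon$.
\end{itemize}

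Writing $a_k:=\|\Delta\vtheta^k\|$, these combine to
\[
a_{k+1}\le(1+\eta L)a_k+\eta^2 c_1,\qquad a_0=0 .
\]

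\textbf{Step 4: unroll.} Unrolling the recursion gives
\[
a_k\le \eta^2c_1\sum_{m=0}^{k-1}(1+\eta L)^m=\eta c_1\frac{(1+\eta L)^k-1}{L}.
\]
Using $(1+\eta L)^k\le e^{\eta kL}$ yields the first inequality. Monotonicity in $k$ and $\eta k\le T$ then give the second, and the final bound is $\cO(\eta)$ because its constant depends only on $G,L,\rho,N,T$.

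\textbf{Main obstacle.} The delicate point is Step 1. It requires verifying that the Lemma~\ref{lem:staleness} remainder admits a uniform constant $C_\varepsilon$ valid at every iterate, not just asymptotically. The approach is to redo the Taylor expansion of each fresh block gradient with an explicit Lipschitz-Hessian bound. The one subtlety is that the fresh states themselves use corrected gradients; this is handled by first bounding every block displacement by $\eta G$ using (A1).
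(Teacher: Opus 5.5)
Your proposal is correct and follows the paper's proof essentially line for line: the paper likewise subtracts the two updates to get the exact identity with the $\eta^3$ remainder from Lemma~\ref{lem:staleness}, bounds the gradient difference by $L\|\Delta\vtheta^k\|$ and the forcing term by $\eta^2 L_\cS G$, and unrolls $a_{k+1}\le(1+\eta L)a_k+\eta^2c_1$ as a geometric sum with $(1+\eta L)^k\le e^{\eta kL}\le e^{TL}$. Your extra sketch of why a uniform $C_\varepsilon$ exists goes beyond the paper, which simply takes this constant as given. If you write that sketch out, note that replacing the fresh displacement $-\eta\vtg_j$ by $-\eta\vg_j$ in the linear term also needs (A2), through $\|\vtg_j-\vg_j\|\le L\cdot\cO(\eta G)$, which contributes a constant of size roughly $L^2G$ in addition to the $\rho G^2$ term from (A3).
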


The lemma shows that the parameters of JC and scheme $\cS$ remain within $\cO(\eta)$ of each other for all $k$ with $\eta k\le T$.
This bound allows the higher-order terms in the trajectory and loss to be bounded by $\cO(\eta^3)$.
The proof is given in Appendix~\ref{app:trajectory_size}.
We now derive the update relation for $\Delta\vtheta^k$.

\begin{lemma}[Trajectory Recursion for General Delay Patterns]
\label{lem:multistep_recursion}
Under Assumption~\ref{ass:regularity}, let $\cS$ be an acyclic delay pattern with associated matrix $\mM_\cS$. Suppose $\Delta\vtheta^0=0$ and $\eta K\le T=\cO(1)$. Then for $0 \le k < K$, the trajectory difference satisfies:

\begin{equation}
\Delta\vtheta^{k+1} = \underbrace{(\mI - \eta\mH_{\text{JC}}^k)\Delta\vtheta^k}_{\text{contract inherited}} + \underbrace{\eta^2 \mM_\cS \vg_\cS^k}_{\text{new forcing}} + \cO\!\left(\eta \|\Delta\vtheta^k\|^2 + \eta^3\right). \label{eq:traj_evolution}
\end{equation}
\end{lemma}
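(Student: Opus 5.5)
We write each update as a gradient step, subtract the two, and linearize the gradient difference around the JC iterate. By Lemma~\ref{lem:staleness}, applied at the base point $\vtheta_\cS^k$ with all Hessian blocks evaluated there, the scheme-$\cS$ update reads
\[
\vtheta_\cS^{k+1}=\vtheta_\cS^k-\eta\vg_\cS^k+\eta^2\mM_\cS\vg_\cS^k-\eta\,\varepsilon_\cS^k .
\]
Here $\|\varepsilon_\cS^k\|\le C_\varepsilon\eta^2$, with $C_\varepsilon$ depending only on $G,L,\rho,N$ through (A1)--(A3), so the step contributes $\cO(\eta^3)$. The JC update is exactly $\vtheta_{\mathrm{JC}}^{k+1}=\vtheta_{\mathrm{JC}}^k-\eta\vg_{\mathrm{JC}}^k$. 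Subtracting gives
\[
\Delta\vtheta^{k+1}=\Delta\vtheta^k-\eta\big(\vg_\cS^k-\vg_{\mathrm{JC}}^k\big)+\eta^2\mM_\cS\vg_\cS^k+\cO(\eta^3).
\]
The new-forcing term already has exactly the form claimed.

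It remains to expand the gradient difference. Since $\Omega$ is convex and contains the segment from $\vtheta_{\mathrm{JC}}^k$ to $\vtheta_\cS^k$, we use the integral form of Taylor's theorem:
\[
\vg_\cS^k-\vg_{\mathrm{JC}}^k=\int_0^1\nabla^2\cL\big(\vtheta_{\mathrm{JC}}^k+t\Delta\vtheta^k\big)\,dt\;\Delta\vtheta^k
=\mH_{\mathrm{JC}}^k\Delta\vtheta^k+\vr^k .
\]
By (A3), $\|\vr^k\|\le\int_0^1\rho t\|\Delta\vtheta^k\|^2dt=\frac{\rho}{2}\|\Delta\vtheta^k\|^2$. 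Multiplying by $\eta$ produces the $(\mI-\eta\mH_{\mathrm{JC}}^k)\Delta\vtheta^k$ term and a remainder of size $\cO(\eta\|\Delta\vtheta^k\|^2)$, which completes the recursion.

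The main obstacle is making the $\cO(\eta^3)$ remainder from Lemma~\ref{lem:staleness} uniform in $k$. That lemma is stated only for ``small $\eta$'' at a single iterate. We will reprove its expansion quantitatively at an arbitrary base point $\vtheta_\cS^k$, inducting along the acyclic ordering.
\begin{itemize}
\item Each fresh displacement is $\vtheta_j^{k+1}-\vtheta_j^k=-\eta\vtg_j$, with $\|\vtg_j\|\le G$ by (A1).
\item The intermediate states $\tilde\vtheta^{(i)}$ lie in $\Omega$ by assumption. So a first-order Taylor expansion of $\nabla_{\vtheta_i}\cL$ with the (A3) remainder gives an error of order $\rho(\eta G N)^2$.
\item Replacing $\vtg_j$ by $\vg_j$ inside the first-order term costs $\eta L\cdot\cO(\eta)$ by the induction hypothesis.
\end{itemize}
This yields $C_\varepsilon$ depending only on $G,L,\rho,N$, independent of $k$ and $\eta$, which is the constant used in Lemma~\ref{lem:traj_size}.

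Note that the statement keeps the error in the form $\cO(\eta\|\Delta\vtheta^k\|^2+\eta^3)$ rather than collapsing it. Lemma~\ref{lem:traj_size} would turn it into $\cO(\eta^3)$, but that step is not needed here. Finally, all constants are independent of $k$ because (A1)--(A3) hold uniformly on $\Omega$.
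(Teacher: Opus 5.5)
Your proposal is correct and follows essentially the same route as the paper. You write the scheme-$\cS$ step via Lemma~\ref{lem:staleness} with an $\cO(\eta^3)$ displacement remainder, subtract the exact JC step, and expand $\vg_\cS^k-\vg_{\mathrm{JC}}^k=\mH_{\mathrm{JC}}^k\Delta\vtheta^k+\cO(\rho\|\Delta\vtheta^k\|^2)$ using (A3); your inductive derivation of a $k$-independent $C_\varepsilon$ along the acyclic order is a useful addition, since the paper simply asserts $\|\bm{\varepsilon}^k\|\le C_\varepsilon$.
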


The first term propagates the existing trajectory difference through the linearized JC update, while the second introduces the effect of fresh block updates. By Lemma~\ref{lem:traj_size}, $\|\Delta\vtheta^k\|=\cO(\eta)$, so the displayed remainder is $\cO(\eta^3)$. The proof is given in Appendix~\ref{app:trajectory_recursion}.
Substituting this trajectory recursion into a second-order expansion of $\cL$ around $\vtheta_{\mathrm{JC}}^{k+1}$ yields the following loss recursion.

\begin{theorem}[Loss Evolution Recursion]
\label{thm:loss_recursion}
Under Assumption~\ref{ass:regularity}, let $\cS$ be an arbitrary acyclic delay pattern with associated staleness matrix $\mM_\cS$, and let $\eta K \le T = \cO(1)$ so that Lemma~\ref{lem:traj_size} applies. For any step $0 \le k < K$, define the trajectory difference $\Delta\vtheta^k := \vtheta_{\cS}^k - \vtheta_{\text{JC}}^k$ and loss difference $\Delta\cL^{k+1} := \cL_{\cS}^{k+1} - \cL_{\text{JC}}^{k+1}$ between the $\cS$-delayed method and Jacobi, where $\vg_{\text{JC}}^k := \nabla\cL(\vtheta_{\text{JC}}^k)$, $\mH_{\text{JC}}^k := \nabla^2\cL(\vtheta_{\text{JC}}^k)$, and $\vg_\cS^k := \nabla\cL(\vtheta_\cS^k)$. Then the loss difference at step $k+1$ satisfies:
\begin{equation}
\begin{aligned}
\Delta\cL^{k+1}
= \underbrace{\vg_{\text{JC}}^{k+1}{}^\top (\mI-\eta\mH_{\text{JC}}^k)\Delta\vtheta^k}_{\text{propagation: }\cO(\eta)}
 + \underbrace{\tfrac12(\Delta\vtheta^k)^\top\mH_{\text{JC}}^k\Delta\vtheta^k}_{\text{quadratic: }\cO(\eta^2)}
+ \underbrace{\eta^2\vg_{\text{JC}}^{k+1}{}^\top\mM_\cS\vg_\cS^k}_{\text{freshness: }\cO(\eta^2)}
 + \cO(\eta^3).
\end{aligned}
\label{eq:loss_evolution}
\end{equation}
\end{theorem}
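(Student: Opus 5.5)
The plan is to expand $\cL$ to second order around the Jacobi iterate $\vtheta_{\mathrm{JC}}^{k+1}$, evaluated at $\vtheta_\cS^{k+1} = \vtheta_{\mathrm{JC}}^{k+1} + \Delta\vtheta^{k+1}$, and then substitute the trajectory recursion of Lemma~\ref{lem:multistep_recursion} for $\Delta\vtheta^{k+1}$. Assumption~\ref{ass:regularity} gives $\cL\in C^3$ on the convex set $\Omega$, which contains the segment joining the two iterates, and a Lipschitz Hessian with constant $\rho$. Taylor's theorem with integral remainder then yields
\[
\Delta\cL^{k+1} = \vg_{\mathrm{JC}}^{k+1}{}^\top\Delta\vtheta^{k+1} + \tfrac12(\Delta\vtheta^{k+1})^\top\mH_{\mathrm{JC}}^{k+1}\Delta\vtheta^{k+1} + R,
\qquad |R|\le \tfrac{\rho}{6}\|\Delta\vtheta^{k+1}\|^3 .
\]
Lemma~\ref{lem:traj_size} gives $\|\Delta\vtheta^{k+1}\|=\cO(\eta)$ uniformly for $\eta(k+1)\le T$, so $R=\cO(\eta^3)$.

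Next I substitute $\Delta\vtheta^{k+1}=(\mI-\eta\mH_{\mathrm{JC}}^k)\Delta\vtheta^k+\eta^2\mM_\cS\vg_\cS^k+E$ with $E=\cO(\eta\|\Delta\vtheta^k\|^2+\eta^3)=\cO(\eta^3)$.

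The linear term splits into three pieces. The first is the propagation term $\vg_{\mathrm{JC}}^{k+1}{}^\top(\mI-\eta\mH_{\mathrm{JC}}^k)\Delta\vtheta^k$. The second is the freshness term $\eta^2\vg_{\mathrm{JC}}^{k+1}{}^\top\mM_\cS\vg_\cS^k$. The third is $\vg_{\mathrm{JC}}^{k+1}{}^\top E$, which (A1) bounds by $G\|E\|=\cO(\eta^3)$. The orders claimed in the statement follow directly: propagation is $\cO(\eta)$ since $\|\vg\|\le G$, $\|\mI-\eta\mH\|\le 1+\eta L$, and $\|\Delta\vtheta^k\|=\cO(\eta)$; freshness is at most $\eta^2 G^2 L_\cS$.

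For the quadratic term, write $\Delta\vtheta^{k+1}=\Delta\vtheta^k+\delta$. Here $\delta=-\eta\mH_{\mathrm{JC}}^k\Delta\vtheta^k+\eta^2\mM_\cS\vg_\cS^k+E$, and each piece is $\cO(\eta^2)$, so $\|\delta\|=\cO(\eta^2)$. Then
\[
\tfrac12(\Delta\vtheta^{k+1})^\top\mH_{\mathrm{JC}}^{k+1}\Delta\vtheta^{k+1}
=\tfrac12(\Delta\vtheta^{k})^\top\mH_{\mathrm{JC}}^{k+1}\Delta\vtheta^{k}+\cO\big(L\|\Delta\vtheta^k\|\|\delta\|+L\|\delta\|^2\big),
\]
and the error is $\cO(\eta^3)$. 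Finally, I replace $\mH_{\mathrm{JC}}^{k+1}$ by $\mH_{\mathrm{JC}}^{k}$ using (A3). The JC step has length $\eta\|\vg_{\mathrm{JC}}^k\|\le\eta G$, so $\|\mH_{\mathrm{JC}}^{k+1}-\mH_{\mathrm{JC}}^k\|\le\rho\eta G$, and this swap costs at most $\tfrac12\rho\eta G\|\Delta\vtheta^k\|^2=\cO(\eta^3)$. Collecting all the pieces gives \eqref{eq:loss_evolution}.

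I expect the main obstacle to be uniformity of the constants rather than the algebra. All $\cO(\cdot)$ constants must be independent of $k$ and depend only on $G,L,\rho,N,T$. This requires the $\cO(\eta^3)$ remainder in Lemma~\ref{lem:multistep_recursion} to carry a uniform constant, and Lemma~\ref{lem:traj_size} to be applied at both indices $k$ and $k+1$ under the budget $\eta K\le T$. It also requires every segment used in a Taylor expansion, namely between the two trajectories and along the single JC step, to lie in $\Omega$, which Assumption~\ref{ass:regularity} provides. I would state these checks explicitly and otherwise treat the bounds as routine.
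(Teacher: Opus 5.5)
Your proposal is correct and follows essentially the same route as the paper. Both arguments Taylor-expand $\cL$ to second order about $\vtheta_{\mathrm{JC}}^{k+1}$, substitute the trajectory recursion into the linear term, reduce the quadratic term to $\tfrac12(\Delta\vtheta^k)^\top\mH_{\mathrm{JC}}^k\Delta\vtheta^k$ via the drift bound $\|\mH_{\mathrm{JC}}^{k+1}-\mH_{\mathrm{JC}}^k\|\le\rho\eta G$, and use Lemma~\ref{lem:traj_size} to make every remainder $\cO(\eta^3)$; your single perturbation $\delta=\cO(\eta^2)$ is just a more compact version of the paper's term-by-term expansion of $(\mI-\eta\mH_{\mathrm{JC}}^k)^\top\mH_{\mathrm{JC}}^{k+1}(\mI-\eta\mH_{\mathrm{JC}}^k)$.
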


The recursion separates the effect of previous updates from that of the current update scheme.
The first term carries the existing parameter difference through the linearized JC update and measures its effect on the loss; the second accounts for curvature along this difference; and the third captures the additional effect of fresh block values.
Since $\|\Delta\vtheta^k\|=\cO(\eta)$, the first term is $\cO(\eta)$ and the other two are $\cO(\eta^2)$.
When $\Delta\vtheta^k=0$, the loss difference reduces to $\eta^2\cC_\cS+\cO(\eta^3)$, recovering the common-start comparison in Theorem~\ref{thm:master}.
The proof is given in Appendix~\ref{app:loss_recursion}, using Lemma~\ref{lem:multistep_recursion} and a second-order Taylor expansion of the loss.

\subsection{Multi-Step Cumulative Loss Difference}

The preceding recursions describe how the parameter and loss differences change from one iteration to the next.
To compare JC and scheme $\cS$ after $K$ iterations, we must account for how differences introduced at earlier steps affect later updates, rather than simply adding the one-step comparisons from common starting points.
Starting from $\Delta\vtheta^0=0$, we repeatedly apply Lemma~\ref{lem:multistep_recursion} to collect these contributions to $\Delta\vtheta^K$, then substitute the resulting expression into the second-order loss expansion.
The following theorem gives the resulting loss comparison, with an accumulated error of $\cO(K\eta^3)$ under the stated assumptions.

\begin{theorem}[Signed Cumulative Terminal Comparison]
\label{thm:cumulative}
Under Assumption~\ref{ass:regularity}, let $\eta K\le T$ and
$\cS$ be acyclic. Define
\begin{equation}
\mPhi_{K,j+1}=\prod_{\ell=j+1}^{K-1}
(\mI-\eta\mH_{\mathrm{JC}}^\ell),\qquad
\vz_K=\eta^2\sum_{j=0}^{K-1}\mPhi_{K,j+1}\mM_\cS^j\vg_\cS^j,
\label{eq:signed_cumulative_state}
\end{equation}
where factors with larger indices appear to the left, and we set $\mPhi_{K,K}=\mI$ for $j=K-1$.
The matrix $\mM_\cS^j$ is evaluated at $\vtheta_\cS^j$. Then
\begin{equation}
\cL(\vtheta_\cS^K)-\cL(\vtheta_{\mathrm{JC}}^K)
=(\vg_{\mathrm{JC}}^K)^\top\vz_K
+\tfrac12\vz_K^\top\mH_{\mathrm{JC}}^K\vz_K
+\cO(K\eta^3).
\label{eq:signed_cumulative_loss}
\end{equation}
In particular, define
$b_K=\eta^2\sum_{j=0}^{K-1}(1+\eta L)^{K-1-j}
\|\mM_\cS^j\vg_\cS^j\|$. Then
\begin{equation}
|\Delta\cL^K|\le G b_K+\tfrac L2 b_K^2+\cO(K\eta^3).
\end{equation}
\end{theorem}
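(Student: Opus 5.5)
The approach is to unroll the trajectory recursion of Lemma~\ref{lem:multistep_recursion} from $\Delta\vtheta^0=0$ to obtain $\Delta\vtheta^K=\vz_K+\vr_K$ with $\|\vr_K\|=\cO(K\eta^3)$. That representation is then inserted into a second-order Taylor expansion of $\cL$ around $\vtheta_{\mathrm{JC}}^K$.

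Write the recursion as
\[
\Delta\vtheta^{k+1}=(\mI-\eta\mH_{\mathrm{JC}}^k)\Delta\vtheta^k+\eta^2\mM_\cS^k\vg_\cS^k+\varepsilon_k ,
\]
where $\|\varepsilon_k\|\le C(\eta\|\Delta\vtheta^k\|^2+\eta^3)$. Lemma~\ref{lem:traj_size} gives $\|\Delta\vtheta^k\|=\cO(\eta)$ uniformly for $\eta k\le T$, so $\|\varepsilon_k\|\le C'\eta^3$. Induction on $k$, using the linearity of the recursion in $\Delta\vtheta$, gives
\[
\Delta\vtheta^K=\sum_{j=0}^{K-1}\mPhi_{K,j+1}\bigl(\eta^2\mM_\cS^j\vg_\cS^j+\varepsilon_j\bigr)=\vz_K+\vr_K .
\]
Each propagator is controlled by (A2): $\|\mI-\eta\mH_{\mathrm{JC}}^\ell\|\le 1+\eta L$. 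Hence
\[
\|\mPhi_{K,j+1}\|\le(1+\eta L)^{K-1-j}\le e^{\eta KL}\le e^{TL},
\]
and therefore $\|\vr_K\|\le e^{TL}C'K\eta^3$. The same estimate gives $\|\vz_K\|\le b_K\le e^{TL}L_\cS G\,K\eta^2\le e^{TL}L_\cS G\,T\eta=\cO(\eta)$.

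For the loss, expand $\cL(\vtheta_{\mathrm{JC}}^K+\Delta\vtheta^K)$ to second order, using convexity of $\Omega$ so the segment stays inside. By (A3) the Taylor remainder is $\le\frac{\rho}{6}\|\Delta\vtheta^K\|^3=\cO(\eta^3)\le\cO(K\eta^3)$. Substituting $\Delta\vtheta^K=\vz_K+\vr_K$ gives three kinds of terms:
\begin{itemize}
\item the gradient term $(\vg_{\mathrm{JC}}^K)^\top\vr_K$, bounded by $G\|\vr_K\|=\cO(K\eta^3)$;
\item the cross term $(\vz_K)^\top\mH_{\mathrm{JC}}^K\vr_K$, bounded by $L\|\vz_K\|\|\vr_K\|=\cO(K\eta^4)$;
\item the term $\tfrac12\vr_K^\top\mH\vr_K$, which is even smaller.
\end{itemize}
This yields \eqref{eq:signed_cumulative_loss}. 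The bound on $|\Delta\cL^K|$ then follows from Cauchy--Schwarz and (A1)--(A2): $|(\vg_{\mathrm{JC}}^K)^\top\vz_K|\le Gb_K$ and $\tfrac12|\vz_K^\top\mH\vz_K|\le\tfrac L2 b_K^2$, where $\|\vz_K\|\le b_K$ by the triangle inequality and the propagator bound.

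The main obstacle is keeping the remainder genuinely $\cO(K\eta^3)$ rather than letting it compound. There are two separate concerns:
\begin{itemize}
\item \textbf{Uniform constants.} The constant hidden in $\varepsilon_k$ must be uniform in $k$. This needs the uniform $\cO(\eta)$ bound of Lemma~\ref{lem:traj_size} together with constants depending only on $G,L,\rho,N,T$.
\item \textbf{Non-amplification.} The propagators must not amplify the local errors, which needs $\eta K\le T$ so that $(1+\eta L)^K$ stays bounded by $e^{TL}$.
\end{itemize}
If (A2) alone is judged insufficient to make $\mI-\eta\mH_{\mathrm{JC}}^\ell$ a valid linearization, I would first check how the $\cO(\eta\|\Delta\vtheta^k\|^2)$ term in Lemma~\ref{lem:multistep_recursion} was derived and confirm it already absorbs the Hessian mismatch between $\vtheta_\cS^k$ and $\vtheta_{\mathrm{JC}}^k$.

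Note that $\mM_\cS^j$ is built from Hessian blocks at $\vtheta_\cS^j$ while $\mPhi$ uses $\mH_{\mathrm{JC}}$. This is consistent with the forcing term as stated in Lemma~\ref{lem:multistep_recursion}, so no re-centering is needed.
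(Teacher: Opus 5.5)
Your proposal is correct and follows the paper's proof essentially step for step. You unroll the recursion of Lemma~\ref{lem:multistep_recursion} using the uniform $\cO(\eta)$ bound of Lemma~\ref{lem:traj_size}, and bound the propagators by $(1+\eta L)^{K-1-j}\le e^{TL}$ to get $\Delta\vtheta^K=\vz_K+\vr_K$ with $\|\vr_K\|=\cO(K\eta^3)$ and $\|\vz_K\|\le b_K=\cO(\eta)$; you then substitute into the second-order expansion \eqref{eq:loss_from_traj}, bound the linear, cross, $\vr_K$-quadratic and cubic remainders as the paper does, and obtain the magnitude bound by Cauchy--Schwarz on the retained terms.
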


The matrix $\mPhi_{K,j+1}$ describes how subsequent updates change the parameter difference introduced at iteration $j$, and $\vz_K$ combines these contributions to approximate $\Delta\vtheta^K$.
The two terms in \eqref{eq:signed_cumulative_loss} give a signed estimate of the loss difference: a negative estimate favors $\cS$ and a positive estimate favors JC.
The proof is given in Appendix~\ref{app:cumulative}.

Together, these results extend the one-step cross-curvature comparison
to multiple iterations by accounting for both the accumulation and
propagation of update-induced parameter differences. The resulting
loss estimate connects local block interactions to the sign and
magnitude of the loss gap along the training trajectories. We next
evaluate how accurately the recursive and cumulative estimates capture
this gap.
\section{Experiments}
\label{sec:experiments}
\label{sec:literal-cumulative}

Our experiments evaluate whether the loss-comparison formulas explain
which update rule performs better after a given number of iterations. We ask
three questions: (i)~does the cumulative estimate identify the lower-loss
method and capture changes in the winner as $K$ varies; (ii)~does it
predict the magnitude of the terminal loss gap; and (iii)~how does
propagating an approximate parameter difference affect accuracy relative
to evaluating the loss recursion with the measured difference?
These questions assess the retained terms in
Theorems~\ref{thm:cumulative} and~\ref{thm:loss_recursion} against
independently measured JC and GS losses.

We address the same three questions in settings with different
block-coupling structures: layer parameters in a DNN, personal and
shared parameters in federated learning, and the two low-rank factors
in LoRA. In each setting, we compare the cumulative and recursive estimates
with the measured GS--JC loss differences over training, using the
same metrics for winner agreement and gap-estimation error. All three
studies use intermediate gradients and Hessians from the actual
training trajectories, providing direct diagnostics of the formulas.
We first specify the common evaluation protocol, then compare the
results across the three settings for each validation question.

\providecommand{\LiteralVisualDir}{figures}
\definecolor{LiteralInk}{HTML}{192D46}
\definecolor{LiteralViolet}{HTML}{7954C5}
\definecolor{LiteralTeal}{HTML}{008E9B}
\definecolor{LiteralHeader}{gray}{0.93}
\begin{table}[!htbp]
\centering
\begingroup\setlength{\abovecaptionskip}{0pt}\setlength{\belowcaptionskip}{6pt}
\caption{Winner identification and loss-gap accuracy of the cumulative
and recursive estimates at $K=500$ iterations.
Arrows indicate whether higher ($\uparrow$) or lower ($\downarrow$) values are better.
Bold denotes the best value for each metric within each task, including ties.}
\label{tab:literal-cumulative-winners}\endgroup
\begin{minipage}{\linewidth}
\centering
\fontsize{9}{11}\selectfont
\setlength{\tabcolsep}{1.5pt}
\renewcommand{\arraystretch}{1.22}
\begin{tabular*}{\linewidth}{@{\extracolsep{\fill}}llrrrrr@{}}
\toprule
\noalign{\begingroup\color{LiteralHeader}\hrule height\dimexpr\arraystretch\ht\strutbox+\arraystretch\dp\strutbox\relax\vskip-\dimexpr\arraystretch\ht\strutbox+\arraystretch\dp\strutbox\relax\endgroup}
\textbf{Task} & \textbf{Estimate} & \textbf{Correct} $\boldsymbol{\uparrow}$ & \textbf{JC hits} $\boldsymbol{\uparrow}$ & \textbf{GS hits} $\boldsymbol{\uparrow}$ & \textbf{MAE} $\boldsymbol{\downarrow}$ & \textbf{Max.\ AE} $\boldsymbol{\downarrow}$ \\
\midrule
\multirow{2}{*}{\textbf{DNN}} & \textcolor{LiteralViolet}{Cumulative} & 98.0\% & \textbf{45/45} & 445/455 & $5.58\!\times\!10^{-4}$ & $2.09\!\times\!10^{-3}$ \\
 & \textcolor{LiteralTeal}{Recursive} & \textbf{100.0\%} & \textbf{45/45} & \textbf{455/455} & $\boldsymbol{1.15\!\times\!10^{-4}}$ & $\boldsymbol{4.86\!\times\!10^{-4}}$ \\
\midrule
\multirow{2}{*}{\textbf{FL}} & \textcolor{LiteralViolet}{Cumulative} & 83.4\% & \textbf{25/28} & 392/472 & $2.95\!\times\!10^{-1}$ & $2.28\!\times\!10^{1}$ \\
 & \textcolor{LiteralTeal}{Recursive} & \textbf{93.2\%} & \textbf{25/28} & \textbf{441/472} & $\boldsymbol{1.79\!\times\!10^{-2}}$ & $\boldsymbol{3.86\!\times\!10^{-1}}$ \\
\midrule
\multirow{2}{*}{\textbf{LoRA}} & \textcolor{LiteralViolet}{Cumulative} & 97.6\% & \textbf{74/75} & 414/425 & $5.31\!\times\!10^{-2}$ & $4.54\!\times\!10^{0}$ \\
 & \textcolor{LiteralTeal}{Recursive} & \textbf{99.6\%} & \textbf{74/75} & \textbf{424/425} & $\boldsymbol{5.97\!\times\!10^{-4}}$ & $\boldsymbol{1.01\!\times\!10^{-1}}$ \\
\bottomrule
\end{tabular*}
\end{minipage}
\par\vspace{10pt}
\begin{minipage}{\linewidth}
\includegraphics[width=\linewidth]{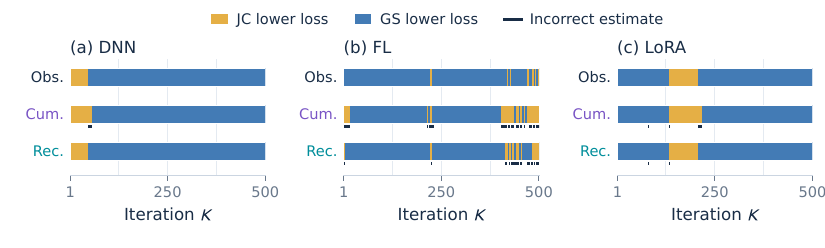}
\vspace{-1em}
\captionof{figure}{Measured winners (Obs.) and predictions from the cumulative
(Cum.) and recursive (Rec.) estimates.}
\label{fig:literal-winner-timeline}
\end{minipage}
\end{table}

\subsection{Evaluation Protocol and Theoretical Estimates}
\label{sec:literal-evaluation}

\paragraph{Controlled trajectories.}
For each task, JC and GS start from the same parameters and optimize
the same fixed training objective using full-batch block-gradient
descent with a constant learning rate, without momentum or adaptive
preconditioning. We evaluate both losses after every
complete block sweep $K=1,\ldots,500$. Gradients use the entire fixed
data subset, dropout is disabled, and batch-normalization running
statistics are fixed. There is no stochastic minibatch or client
sampling during training.
Appendix~\ref{app:intro_methods} describes the experimental details.

\paragraph{Recursive and cumulative estimates.}
We compare the measured GS--JC loss gap with the recursive and
cumulative estimates from Theorems~\ref{thm:loss_recursion}
and~\ref{thm:cumulative}, respectively. The recursive estimate uses
the actual parameter difference from the previous iteration.
The cumulative estimate instead propagates an approximate parameter
difference from the common initialization. Both use gradients and
Hessians evaluated along the actual JC and GS training trajectories.

\paragraph{Metrics and baselines.}
For a run of $K$ iterations, let $\widehat{\Delta\cL}^{k}$ be the
estimated gap at iteration $k$ and
$\Delta\cL^{k}=\cL(\vtheta_{\mathrm{GS}}^{k})-
\cL(\vtheta_{\mathrm{JC}}^{k})$ the measured gap, for $k=1,\ldots,K$.
Positive gaps favor JC and negative gaps favor GS. For both measured
and estimated gaps, define $w(x)=\mathrm{JC}$ when $x>0$,
and $w(x)=\mathrm{GS}$ when $x<0$. With $\mathbf 1\{\cdot\}$
denoting an indicator, the winner agreement is
$\mathrm{Correct}_K=\frac{100\%}{K}
\sum_{k=1}^{K}\mathbf 1\{w(\widehat{\Delta\cL}^{k})
=w(\Delta\cL^{k})\}$.
For $m\in\{\mathrm{JC},\mathrm{GS}\}$, define
$n_m=\sum_{k=1}^{K}\mathbf 1\{w(\Delta\cL^k)=m\}$
and $h_m=\sum_{k=1}^{K}\mathbf 1\{w(\Delta\cL^k)=m\}
\mathbf 1\{w(\widehat{\Delta\cL}^k)=m\}$.
The table displays \emph{JC hits} and \emph{GS hits} as the count
pairs $h_m/n_m$.
For gap magnitude, let
$e_k=|\widehat{\Delta\cL}^{k}-\Delta\cL^{k}|$ and report
$\mathrm{MAE}_K=\frac{1}{K}\sum_{k=1}^{K}e_k$ and
$\mathrm{Max.\ AE}_K=\max_{1\le k\le K}e_k$.
MAE measures average gap-estimation error, whereas Max.\ AE captures
the largest error. 


\subsection{Validation Across Block-Coupling Structures}
\label{sec:literal-results}

We apply the same evaluation protocol to three block structures.
For the DNN, we train an eight-layer ReLU MLP on 256 fixed MNIST
examples, with each affine layer forming a block and GS updating
from input to output.
For federated learning, we use a ResNet-18 on CIFAR-10
with three fully participating clients and eight fixed examples per
client. Both methods optimize the same mean client objective, with
GS updating all personal parameters before the shared parameters.
For LoRA, we adapt the query and value projections of T5-base on eight
fixed SST-2 examples using rank-eight adapters. The two blocks contain
all $B$ factors and all $A$ factors, with GS updating $B$ first.

\providecommand{\LiteralVisualDir}{figures}
\begin{figure}[t]
\centering
\includegraphics[width=\linewidth]{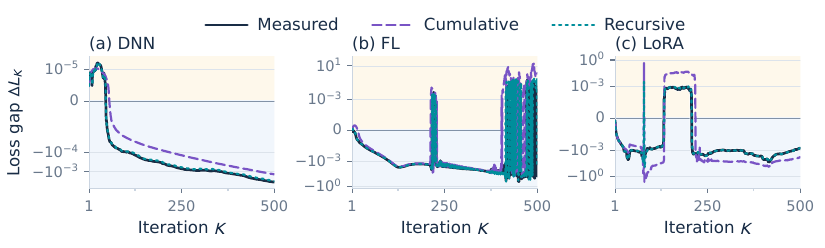}
\vspace{-1em}
\caption{Measured and estimated loss gaps,
$\Delta L_K=L_{\mathrm{GS}}^K-L_{\mathrm{JC}}^K$, over 500 iterations.
Positive/negative values favor JC/GS.}
\label{fig:literal-signed-loss-gaps}
\end{figure}

\paragraph{Identifying the lower-loss method.}
Table~\ref{tab:literal-cumulative-winners} reports overall winner
agreement and correct predictions for each method, while
Figure~\ref{fig:literal-winner-timeline} shows when errors occur.
For the DNN, JC has lower loss for the first 45 iterations and GS
thereafter. The cumulative estimate reproduces this ordering but
places the transition at $K=56$ instead of $K=46$. Its ten errors
are therefore confined to $K=46$--$55$, giving $98.0\%$ agreement;
the recursive estimate matches the measured winner at all 500 iterations.

For federated learning, GS has lower loss at 472 of the 500 iterations,
interrupted by brief intervals favoring JC
(Figure~\ref{fig:literal-winner-timeline}(b)). Both estimates correctly
identify 25 of the 28 JC wins. Their difference in agreement comes from the
GS intervals: the cumulative estimate incorrectly predicts JC at
80 such iterations, compared with 31 for the recursive estimate.
This accounts for the improvement in agreement from $83.4\%$ to
$93.2\%$, with most errors occurring late in training.

For LoRA, GS has lower loss at $K=1$--$132$, JC at
$K=133$--$207$, and GS again at $K=208$--$500$
(Figure~\ref{fig:literal-winner-timeline}(c)). The cumulative estimate
identifies 74 of the 75 JC wins and 414 of the 425 GS wins,
giving $97.6\%$ agreement. It places the sustained JC interval at
$K=134$--$217$: its errors are the one-step delay at $K=133$,
the ten-step delay in returning to GS at $K=208$--$217$, and an
isolated false JC prediction at $K=79$. The recursive estimate errors
only at $K=79$ and $K=133$, reaching $99.6\%$ agreement.

Overall, JC and GS each achieve lower loss at different stages of training in all three settings. Both the recursive and cumulative formulas capture the changes.
These results support the use of our theories to identify which method is favored along the observed training trajectories.

\paragraph{Estimating the loss-gap magnitude.}
The recursive estimate also has a lower mean absolute error (MAE) in all
three settings. The cumulative and recursive MAEs are, respectively,
$5.58\times10^{-4}$ and $1.15\times10^{-4}$ for the DNN,
$0.295$ and $0.0179$ for federated learning, and
$5.31\times10^{-2}$ and $5.97\times10^{-4}$ for LoRA.
The maximum absolute errors show the same pattern. For federated
learning, they are $22.8$ for the cumulative estimate and $0.386$
for the recursive estimate (Figure~\ref{fig:literal-signed-loss-gaps}(b)).
For LoRA, the corresponding values are $4.54$ and $1.01\times10^{-1}$
(Figure~\ref{fig:literal-signed-loss-gaps}(c)). 
Overall, the recursive formula estimates the loss gap more accurately than
the cumulative formula across all three settings, with lower mean and
maximum absolute errors.

\section{Conclusion and Future Work}
\label{sec:conclusion}

We developed a unified theory for comparing simultaneous, sequential,
and partially sequential block-gradient updates. Cross-block curvature
characterizes the leading one-step loss difference, while trajectory
and loss recursions extend the comparison to multiple iterations. Experiments on neural
networks, federated learning, and LoRA show that the resulting estimates
capture changes in the lower-loss method along observed training
trajectories. The recursive estimate, which uses measured parameter
differences, achieves higher winner agreement and lower loss-gap errors
than the cumulative estimate in all three settings.
Further directions include extending the analysis to
stochastic gradients and optimizers with momentum or adaptive step
sizes.

\subsection*{AI use statement}

In this work, we used generative AI tools to assist with writing and
language polishing, as well as literature retrieval and discovery of
related work. We have reviewed all AI-assisted work, including suggested
text and references, and made the final decisions on the manuscript's
wording and citations. We take responsibility for the final content of
this work, including text, claims, or artifacts produced with the aid of
generative AI.

\bibliographystyle{iclr2027_conference}
\bibliography{iclr2027_conference}

@book{young1971iterative,
  title={Iterative Solution of Large Linear Systems},
  author={Young, David M.},
  year={1971},
  publisher={Academic Press}
}

@book{saad2003iterative,
  title={Iterative Methods for Sparse Linear Systems},
  author={Saad, Yousef},
  edition={2nd},
  year={2003},
  publisher={SIAM}
}

@article{nesterov2012efficiency,
  title={Efficiency of coordinate descent methods on huge-scale optimization problems},
  author={Nesterov, Yurii},
  journal={SIAM Journal on Optimization},
  volume={22},
  number={2},
  pages={341--362},
  year={2012}
}

@article{attouch2013convergence,
  title={Convergence of descent methods for semi-algebraic and tame problems: proximal algorithms, forward-backward splitting, and regularized Gauss-Seidel methods},
  author={Attouch, H{\'e}dy and Bolte, J{\'e}r{\^o}me and Svaiter, Benar Fux},
  journal={Mathematical Programming},
  volume={137},
  number={1--2},
  pages={91--129},
  year={2013}
}

@article{beck2013convergence,
  title={On the convergence of alternating minimization for convex programming with applications to iteratively reweighted least squares and decomposition schemes},
  author={Beck, Amir},
  journal={SIAM Journal on Optimization},
  volume={25},
  number={1},
  pages={185--209},
  year={2015}
}

@article{wright2015coordinate,
  title={Coordinate descent algorithms},
  author={Wright, Stephen J.},
  journal={Mathematical Programming},
  volume={151},
  number={1},
  pages={3--34},
  year={2015}
}

@article{wang2019global,
  title={Global convergence of {ADMM} in nonconvex nonsmooth optimization},
  author={Wang, Yu and Yin, Wotao and Zeng, Jinshan},
  journal={Journal of Scientific Computing},
  volume={78},
  pages={29--63},
  year={2019}
}

@inproceedings{lian2015asynchronous,
  title={Asynchronous parallel stochastic gradient for nonconvex optimization},
  author={Lian, Xiangru and Huang, Yijun and Li, Yuncheng and Liu, Ji},
  booktitle={Advances in Neural Information Processing Systems (NeurIPS)},
  volume={28},
  year={2015}
}

@inproceedings{pillutla2022fedalt,
  title={Federated learning with partial model personalization},
  author={Pillutla, Krishna and Malik, Kshitiz and Mohamed, Abdel-rahman and Rabbat, Michael and Sanjabi, Maziar and Xiao, Lin},
  booktitle={Proceedings of the 39th International Conference on Machine Learning (ICML)},
  series={Proceedings of Machine Learning Research},
  volume={162},
  pages={17716--17758},
  year={2022}
}

@article{tseng2001convergence,
  title={Convergence of a block coordinate descent method for nondifferentiable minimization},
  author={Tseng, Paul},
  journal={Journal of Optimization Theory and Applications},
  volume={109},
  number={3},
  pages={475--494},
  year={2001}
}

@article{richtarik2014iteration,
  title={Iteration complexity of randomized block-coordinate descent methods for minimizing a composite function},
  author={Richt{\'a}rik, Peter and Tak{\'a}{\v{c}}, Martin},
  journal={Mathematical Programming},
  volume={144},
  number={1--2},
  pages={1--38},
  year={2014}
}

@article{razaviyayn2013unified,
  title={A unified convergence analysis of block successive minimization methods for nonsmooth optimization},
  author={Razaviyayn, Meisam and Hong, Mingyi and Luo, Zhi-Quan},
  journal={SIAM Journal on Optimization},
  volume={23},
  number={2},
  pages={1126--1153},
  year={2013}
}

@inproceedings{recht2011hogwild,
  title={Hogwild!: A lock-free approach to parallelizing stochastic gradient descent},
  author={Recht, Benjamin and R{\'e}, Christopher and Wright, Stephen and Niu, Feng},
  booktitle={Advances in Neural Information Processing Systems (NeurIPS)},
  volume={24},
  year={2011}
}

@inproceedings{rosca2021drift,
  title={Discretization drift in two-player games},
  author={Rosca, Mihaela and Wu, Yan and Dherin, Benoit and Barrett, David G. T.},
  booktitle={Proceedings of the 38th International Conference on Machine Learning (ICML)},
  series={Proceedings of Machine Learning Research},
  volume={139},
  pages={9064--9074},
  year={2021}
}

@inproceedings{lee2019wide,
  title={Wide neural networks of any depth evolve as linear models under gradient descent},
  author={Lee, Jaehoon and Xiao, Lechao and Schoenholz, Samuel S. and Bahri, Yasaman and Novak, Roman and Sohl-Dickstein, Jascha and Pennington, Jeffrey},
  booktitle={Advances in Neural Information Processing Systems (NeurIPS)},
  volume={32},
  year={2019}
}

@inproceedings{lecun1988framework,
  title={A theoretical framework for back-propagation},
  author={Le Cun, Yann},
  booktitle={Proceedings of the 1988 Connectionist Models Summer School},
  pages={21--28},
  year={1988},
  publisher={Morgan Kaufmann}
}

@inproceedings{mcmahan2017communication,
  title={Communication-efficient learning of deep networks from decentralized data},
  author={McMahan, Brendan and Moore, Eider and Ramage, Daniel and Hampson, Seth and Ag{\"u}era y Arcas, Blaise},
  booktitle={Proceedings of the 20th International Conference on Artificial Intelligence and Statistics (AISTATS)},
  series={Proceedings of Machine Learning Research},
  volume={54},
  pages={1273--1282},
  year={2017}
}

@inproceedings{hu2021lora,
  title={{LoRA}: Low-rank adaptation of large language models},
  author={Hu, Edward J. and Shen, Yelong and Wallis, Phillip and Allen-Zhu, Zeyuan and Li, Yuanzhi and Wang, Shean and Wang, Lu and Chen, Weizhu},
  booktitle={International Conference on Learning Representations (ICLR)},
  year={2022}
}

@inproceedings{hayou2024loraplus,
  title={{LoRA+}: Efficient low rank adaptation of large models},
  author={Hayou, Soufiane and Ghosh, Nikhil and Yu, Bin},
  booktitle={Proceedings of the 41st International Conference on Machine Learning (ICML)},
  series={Proceedings of Machine Learning Research},
  volume={235},
  pages={17783--17806},
  year={2024}
}

@inproceedings{zhu2026lorae2,
  title={{LoRA-E}$^2$: Effective and efficient low-rank adaptation},
  author={Zhu, Shengkun and Zeng, Jinshan and Wang, Yiming and Wang, Sheng and Sun, Yuan and Chen, Shangfeng and Yao, Yuan and Yang, Qiang},
  booktitle={Proceedings of the ACM Web Conference (WWW)},
  pages={7342--7353},
  year={2026}
}

@article{richtarik2016parallel,
  title={Parallel coordinate descent methods for big data optimization},
  author={Richt{\'a}rik, Peter and Tak{\'a}{\v{c}}, Martin},
  journal={Mathematical Programming},
  volume={156},
  number={1--2},
  pages={433--484},
  year={2016}
}

@article{xu2017globally,
  title={A globally convergent algorithm for nonconvex optimization based on block coordinate update},
  author={Xu, Yangyang and Yin, Wotao},
  journal={Journal of Scientific Computing},
  volume={72},
  number={2},
  pages={700--734},
  year={2017}
}

@article{kerahroodi2017coordinate,
  title={A coordinate-descent framework to design low {PSL/ISL} sequences},
  author={Alaee-Kerahroodi, Mohammad and Aubry, Augusto and De Maio, Antonio and Naghsh, Mohammad Mahdi and Modarres-Hashemi, Mahmoud},
  journal={IEEE Transactions on Signal Processing},
  volume={65},
  number={22},
  pages={5942--5956},
  year={2017}
}

@article{wang2008new,
  title={A new alternating minimization algorithm for total variation image reconstruction},
  author={Wang, Yilun and Yang, Junfeng and Yin, Wotao and Zhang, Yin},
  journal={SIAM Journal on Imaging Sciences},
  volume={1},
  number={3},
  pages={248--272},
  year={2008}
}

@article{xu2013block,
  title={A block coordinate descent method for regularized multiconvex optimization with applications to nonnegative tensor factorization and completion},
  author={Xu, Yangyang and Yin, Wotao},
  journal={SIAM Journal on Imaging Sciences},
  volume={6},
  number={3},
  pages={1758--1789},
  year={2013}
}

\clearpage
\appendix
\numberwithin{equation}{section}

\addtocontents{toc}{\protect\setcounter{tocdepth}{2}}
\begingroup
\renewcommand{\contentsname}{Contents of the Appendices}
\hypersetup{linktoc=all}
\tableofcontents
\endgroup
\clearpage

\section{Proofs for One-Step Comparisons}
\label{app:single_step}

This appendix proves Lemma~\ref{lem:staleness}, Theorem~\ref{thm:master},
and Corollary~\ref{cor:criterion}. We first derive the gradient
approximation under an acyclic delay pattern, then establish the
one-step loss expansion, and finally obtain the GS--JC criterion and
its implication for update ordering.

\subsection{Gradient approximation under an acyclic delay pattern}
\label{app:gradient_delay}

Acyclicity permits a sequential implementation respecting all fresh-value dependencies. Each already updated block has displacement $-\eta\vg_j+\cO(\eta^2)$, which is the input to the block-gradient expansion below.

\begin{proof}[Proof of Lemma~\ref{lem:staleness}]
By Taylor expansion: $\vtg_i = \vg_i + \sum_{j : (i,j) \in \cS} \mH_{ij} (\vtheta_j^{k+1} - \vtheta_j^k) + \cO(\|\Delta\vtheta\|^2)$. Since $\vtheta_j^{k+1} - \vtheta_j^k = -\eta \vg_j + \cO(\eta^2)$, the result follows.
\end{proof}

For full GS in the order $1,\ldots,N$, the same calculation gives the geometric interpretation used in the main text:
\begin{align*}
\vtg_i
&=\vg_i+\sum_{j<i}\mH_{ij}(\vtheta_j^{k+1}-\vtheta_j^k)+\cO(\eta^2)\\
&=\vg_i-\eta\sum_{j<i}\mH_{ij}\vg_j+\cO(\eta^2).
\end{align*}

\subsection{The one-step loss expansion}
\label{app:main_theorem}

\begin{proof}[Proof of Theorem~\ref{thm:master}]
For JC, the displacement is $\Delta\vtheta = -\eta \vg$. A second-order Taylor expansion gives
$\cL(\vtheta^k + \Delta\vtheta) = \cL^k + \vg^\top \Delta\vtheta + \frac{1}{2} \Delta\vtheta^\top \mH \Delta\vtheta + \cO(\|\Delta\vtheta\|^3).$
Substituting $\Delta\vtheta = -\eta \vg$ yields
\begin{align*}
\vg^\top \Delta\vtheta &= -\eta \|\vg\|^2, \\
\frac{1}{2} \Delta\vtheta^\top \mH \Delta\vtheta &= \frac{\eta^2}{2} \vg^\top \mH \vg.
\end{align*}
This yields the first equation.

For an acyclic pattern $\cS$, Lemma~\ref{lem:staleness} gives $\Delta\vtheta = -\eta \vtg^\cS = -\eta \vg + \eta^2 \mM_\cS \vg + \cO(\eta^3)$. Substitution into the same Taylor expansion yields
\begin{align*}
\vg^\top \Delta\vtheta &= \vg^\top (-\eta \vg + \eta^2 \mM_\cS \vg) + \cO(\eta^3) = -\eta \|\vg\|^2 + \eta^2 \vg^\top \mM_\cS \vg + \cO(\eta^3), \\
\frac{1}{2} \Delta\vtheta^\top \mH \Delta\vtheta &= \frac{1}{2} (-\eta \vg)^\top \mH (-\eta \vg) + \cO(\eta^3) = \frac{\eta^2}{2} \vg^\top \mH \vg + \cO(\eta^3).
\end{align*}
Combining these expressions gives the expansion for scheme $\cS$; subtracting the JC expansion proves the stated loss difference.
\end{proof}

\subsection{The GS--JC criterion and update ordering}
\label{app:gs_criterion}

\begin{proof}[Proof of Corollary~\ref{cor:criterion}]
For full GS, $\cS_{\mathrm{GS}}=\{(i,j):i>j\}$. Hessian symmetry gives
\begin{align}
\cC_{\cS_{\mathrm{GS}}}
&=\sum_{i>j}\vg_i^\top\mH_{ij}\vg_j
=\sum_{i<j}\vg_i^\top\mH_{ij}\vg_j
=\tfrac12\cC_{\mathrm{cross}}.
\label{eq:app_gs_mask}
\end{align}
Substitution into Theorem~\ref{thm:master} gives the stated loss difference. At a fixed common starting point with nonzero cross-curvature, the sign is determined by the leading term for sufficiently small $\eta$. If the cross-curvature is zero, the difference is $\cO(\eta^3)$.

More generally, a full GS sweep in any order includes exactly one orientation of each unordered pair $\{i,j\}$. Because
$\vg_i^\top\mH_{ij}\vg_j=\vg_j^\top\mH_{ji}\vg_i$,
its masked curvature is again $\cC_{\mathrm{cross}}/2$. Thus the losses from any two full GS orderings, started at the same point, differ by $\cO(\eta^3)$. This is a one-step statement.
\end{proof}

\section{Auxiliary Bounds and Trajectory Dynamics}
\label{app:trajectory}

The trajectory-size estimate precedes the Taylor expansion of the gradient difference: it supplies the uniform scale used to control the remainders. The proof of the trajectory recursion then records each residual explicitly.

\subsection{Auxiliary norm and Hessian bounds}
\label{app:auxiliary_bounds}

The following bounds are used throughout the multi-step proofs. They follow from Assumption~\ref{ass:regularity}. First, the mask $\mM_\cS$ inherits the Hessian bound,
\begin{equation}
\|\mM_\cS\| \;\le\; \sum_{(i,j) \in \cS} \|\mH_{ij}\| \;\le\; |\cS|\,L \;=:\; L_\cS, \qquad |\cS| \le \tfrac{1}{2}N(N-1) \text{ for acyclic } \cS, \label{eq:mask_norm}
\end{equation}
so $L_\cS$ depends only on $L$ and the block count. Second, consecutive Jacobi Hessians differ by $\cO(\eta)$: since $\|\vtheta_{\text{JC}}^{k+1} - \vtheta_{\text{JC}}^k\| = \eta\|\vg_{\text{JC}}^k\| \le \eta G$,
\begin{equation}
\|\mH_{\text{JC}}^{k+1} - \mH_{\text{JC}}^k\| \;\le\; \rho\,\eta\,G. \label{eq:hess_drift}
\end{equation}
Throughout, $\cO(\cdot)$ hides constants depending only on $G, L, \rho, N, T$ --- never on $\eta$ or $k$.

One combination recurs throughout: $\mI - \eta\mH_{\text{JC}}^k$ is the one-step Jacobi linearization applied to the inherited difference. By (A2) it satisfies
\begin{equation}
\left\|\mI - \eta\mH_{\text{JC}}^k\right\| \;\le\; 1 + \eta L. \label{eq:contraction_norm}
\end{equation}

\subsection{Size of the trajectory difference}
\label{app:trajectory_size}

\begin{proof}[Proof of Lemma~\ref{lem:traj_size}]
By Lemma~\ref{lem:staleness} the delayed displacement is $-\eta\vtg^\cS = -\eta\vg_\cS^k + \eta^2\mM_\cS\vg_\cS^k + \eta^3\bm{\varepsilon}^k$ with $\|\bm{\varepsilon}^k\| \le C_\varepsilon$, so subtracting the two update rules gives the \emph{exact} identity
\begin{equation}
\Delta\vtheta^{k+1} = \Delta\vtheta^k - \eta\left(\vg_\cS^k - \vg_{\text{JC}}^k\right) + \eta^2 \mM_\cS \vg_\cS^k + \eta^3\bm{\varepsilon}^k. \label{eq:traj_exact}
\end{equation}
By $L$-smoothness, $\|\vg_\cS^k - \vg_{\text{JC}}^k\| \le L\|\Delta\vtheta^k\|$. With \eqref{eq:mask_norm} and (A1) this yields the scalar recursion
\begin{equation*}
\|\Delta\vtheta^{k+1}\| \;\le\; (1 + \eta L)\,\|\Delta\vtheta^k\| + \eta^2\underbrace{\left(L_\cS G + \eta C_\varepsilon\right)}_{c_1}.
\end{equation*}
Iterating from $\Delta\vtheta^0 = 0$ gives the geometric sum $\|\Delta\vtheta^k\| \le \eta^2 c_1 \sum_{j=0}^{k-1}(1+\eta L)^j = \eta^2 c_1 \frac{(1+\eta L)^k - 1}{\eta L}$, and $(1+\eta L)^k \le e^{\eta k L} \le e^{TL}$ gives \eqref{eq:traj_size}.
\end{proof}

\subsection{The trajectory recursion}
\label{app:trajectory_recursion}

\begin{proof}[Proof of Lemma~\ref{lem:multistep_recursion}]

Taylor-expanding $\cL(\vtheta_\cS^k) = \cL(\vtheta_{\text{JC}}^k + \Delta\vtheta^k)$ about $\vtheta_{\text{JC}}^k$ to second order with Lagrange remainder gives, for some $\tilde\vtheta$ on the segment $[\vtheta_{\text{JC}}^k, \vtheta_\cS^k] \subset \Omega$,
\begin{equation}
\Delta\cL^k = \vg_{\text{JC}}^k{}^\top \Delta\vtheta^k + \frac{1}{2}(\Delta\vtheta^k)^\top \mH_{\text{JC}}^k \Delta\vtheta^k + \underbrace{\frac{1}{6}\nabla^3\cL(\tilde\vtheta)[\Delta\vtheta^k]^{\otimes 3}}_{=:\,R_3^k,\;\; |R_3^k| \,\le\, \frac{\rho}{6}\|\Delta\vtheta^k\|^3}. \label{eq:loss_from_traj}
\end{equation}
Assumption (A3) bounds the third derivative by $\rho$, giving the stated bound on $R_3^k$. For a quadratic loss, $R_3^k\equiv0$. We will also use \eqref{eq:loss_from_traj} at iteration $k+1$ to derive the loss recursion.

To derive the parameter recursion, Lemma~\ref{lem:staleness} gives the displacement under scheme $\cS$ as $-\eta\vtg^\cS = -\eta\vg_\cS^k + \eta^2\mM_\cS\vg_\cS^k + \eta^3\bm{\varepsilon}^k$ with $\|\bm{\varepsilon}^k\| \le C_\varepsilon$, so
\begin{equation*}
\vtheta_{\cS}^{k+1} = \vtheta_{\cS}^k - \eta \vg_{\cS}^k + \eta^2 \mM_\cS \vg_{\cS}^k + \eta^3\bm{\varepsilon}^k, \qquad\qquad \vtheta_{\text{JC}}^{k+1} = \vtheta_{\text{JC}}^k - \eta\vg_{\text{JC}}^k,
\end{equation*}
where the JC update is exact. Subtracting the two updates gives \eqref{eq:traj_exact}. A first-order expansion of the gradient under (A3) yields
\begin{equation}
\vg_\cS^k = \vg_{\text{JC}}^k + \mH_{\text{JC}}^k\Delta\vtheta^k + \vw^k, \qquad \|\vw^k\| \le \tfrac{\rho}{2}\|\Delta\vtheta^k\|^2, \label{eq:grad_diff_expansion}
\end{equation}
Substituting this identity into \eqref{eq:traj_exact} gives
\begin{equation}
\Delta\vtheta^{k+1} = (\mI - \eta\mH_{\text{JC}}^k)\Delta\vtheta^k + \eta^2 \mM_\cS \vg_\cS^k + \underbrace{\left(-\eta\vw^k + \eta^3\bm{\varepsilon}^k\right)}_{=:\,\vs^k}, \qquad \|\vs^k\| \le \tfrac{\eta\rho}{2}\|\Delta\vtheta^k\|^2 + \eta^3 C_\varepsilon, \label{eq:traj_compact}
\end{equation}
Equivalently,
\begin{equation}
\Delta\vtheta^{k+1} = \underbrace{(\mI - \eta\mH_{\text{JC}}^k)\Delta\vtheta^k}_{\text{contract inherited}} + \underbrace{\eta^2 \mM_\cS \vg_\cS^k}_{\text{new forcing}} + \cO\!\left(\eta \|\Delta\vtheta^k\|^2 + \eta^3\right). 
\end{equation}
The gradient $\vg_\cS^k$ is evaluated at the parameters of scheme $\cS$. Replacing it by $\vg_{\text{JC}}^k$ introduces the additional term $\eta^2\mM_\cS(\vg_\cS^k - \vg_{\text{JC}}^k) = \cO(\eta^2\|\Delta\vtheta^k\|)$, which is $\cO(\eta^3)$ by Lemma~\ref{lem:traj_size}. The two parts of the displayed remainder have distinct sources: $\eta\|\Delta\vtheta^k\|^2$ comes from the gradient expansion in \eqref{eq:grad_diff_expansion}, whereas $\eta^3$ comes from the block-update expansion in Lemma~\ref{lem:staleness}. Applying $\|\Delta\vtheta^k\|=\cO(\eta)$ then gives a uniform $\cO(\eta^3)$ bound.
\end{proof}

\section{Proof of the Loss Evolution Recursion}
\label{app:loss_recursion}

\begin{proof}[Proof of Theorem~\ref{thm:loss_recursion}]
Applying \eqref{eq:loss_from_traj} at iteration $k+1$ gives
\begin{equation}
\Delta\cL^{k+1} = \vg_{\text{JC}}^{k+1}{}^\top \Delta\vtheta^{k+1} + \frac{1}{2}(\Delta\vtheta^{k+1})^\top \mH_{\text{JC}}^{k+1} \Delta\vtheta^{k+1} + R_3^{k+1}, \label{eq:loss_k1_taylor}
\end{equation}
where $|R_3^{k+1}| \le \frac{\rho}{6}\|\Delta\vtheta^{k+1}\|^3$. To evaluate the linear term, we use the parameter recursion \eqref{eq:traj_compact},
\begin{equation}
\Delta\vtheta^{k+1} = (\mI - \eta\mH_{\text{JC}}^k)\Delta\vtheta^k + \eta^2 \mM_\cS \vg_\cS^k + \vs^k,
\end{equation}
where $\|\vs^k\| \le \tfrac{\eta\rho}{2}\|\Delta\vtheta^k\|^2 + \eta^3 C_\varepsilon$. Substituting into the linear term:
\begin{align}
\vg_{\text{JC}}^{k+1}{}^\top \Delta\vtheta^{k+1} &= \vg_{\text{JC}}^{k+1}{}^\top\!\left[(\mI - \eta\mH_{\text{JC}}^k)\Delta\vtheta^k + \eta^2\mM_\cS\vg_\cS^k + \vs^k\right] \notag \\
&= \underbrace{\vg_{\text{JC}}^{k+1}{}^\top(\mI - \eta\mH_{\text{JC}}^k)\Delta\vtheta^k}_{\text{propagation: } \cO(\eta)} + \underbrace{\eta^2\,\vg_{\text{JC}}^{k+1}{}^\top\mM_\cS\vg_\cS^k}_{\text{freshness: } \cO(\eta^2)} + \underbrace{\vg_{\text{JC}}^{k+1}{}^\top\vs^k}_{\cO(\eta\|\Delta\vtheta^k\|^2 + \eta^3)}. \label{eq:loss_linear_expanded}
\end{align}
The first two terms appear in \eqref{eq:loss_evolution}, while the remainder satisfies $|\vg_{\text{JC}}^{k+1}{}^\top\vs^k| \le G\,\|\vs^k\| = \cO(\eta\|\Delta\vtheta^k\|^2 + \eta^3)$ by (A1).
For the quadratic term, substituting \eqref{eq:traj_compact} gives
\begin{multline}
(\Delta\vtheta^{k+1})^\top\mH_{\text{JC}}^{k+1}\Delta\vtheta^{k+1} = \left[(\mI - \eta\mH_{\text{JC}}^k)\Delta\vtheta^k + \eta^2\mM_\cS\vg_\cS^k + \vs^k\right]^\top \mH_{\text{JC}}^{k+1} \\ \times \left[(\mI - \eta\mH_{\text{JC}}^k)\Delta\vtheta^k + \eta^2\mM_\cS\vg_\cS^k + \vs^k\right]. \label{eq:loss_quad_full}
\end{multline}

The contribution involving only the existing parameter difference is
\begin{align}
&\left[(\mI - \eta\mH_{\text{JC}}^k)\Delta\vtheta^k\right]^\top\mH_{\text{JC}}^{k+1}\left[(\mI - \eta\mH_{\text{JC}}^k)\Delta\vtheta^k\right] \notag \\
= &(\Delta\vtheta^k)^\top(\mI - \eta\mH_{\text{JC}}^k)^\top\mH_{\text{JC}}^{k+1}(\mI - \eta\mH_{\text{JC}}^k)\Delta\vtheta^k \notag \\
= &(\Delta\vtheta^k)^\top\left[\mH_{\text{JC}}^{k+1} - \eta(\mH_{\text{JC}}^k\mH_{\text{JC}}^{k+1} + \mH_{\text{JC}}^{k+1}\mH_{\text{JC}}^k) + \eta^2\mH_{\text{JC}}^k\mH_{\text{JC}}^{k+1}\mH_{\text{JC}}^k\right]\Delta\vtheta^k. \label{eq:quad_leading_expanded}
\end{align}

Hessian symmetry gives $(\Delta\vtheta^k)^\top\mH_{\text{JC}}^k\mH_{\text{JC}}^{k+1}\Delta\vtheta^k = (\Delta\vtheta^k)^\top\mH_{\text{JC}}^{k+1}\mH_{\text{JC}}^k\Delta\vtheta^k$. Combining this identity with $\mH_{\text{JC}}^{k+1} = \mH_{\text{JC}}^k + \cO(\eta)$ from \eqref{eq:hess_drift} yields
\begin{equation}
\bigl((\mI-\eta\mH_{\text{JC}}^k)\Delta\vtheta^k\bigr)^\top
\mH_{\text{JC}}^{k+1}\bigl((\mI-\eta\mH_{\text{JC}}^k)\Delta\vtheta^k\bigr) = (\Delta\vtheta^k)^\top\mH_{\text{JC}}^k\Delta\vtheta^k + \cO(\eta\|\Delta\vtheta^k\|^2). \label{eq:quad_leading_result}
\end{equation}

The remaining terms in \eqref{eq:loss_quad_full} are bounded using (A1), (A2), and \eqref{eq:traj_compact}.
The term quadratic in the fresh-update correction satisfies $\eta^4 (\mM_\cS\vg_\cS^k)^\top\mH_{\text{JC}}^{k+1}(\mM_\cS\vg_\cS^k) = \cO(\eta^4)$, and the term quadratic in the remainder satisfies $(\vs^k)^\top\mH_{\text{JC}}^{k+1}\vs^k = \cO(\|\vs^k\|^2) = \cO(\eta^2\|\Delta\vtheta^k\|^4 + \eta^6)$.
The interaction between the existing difference and the fresh-update correction satisfies $2\left[(\mI - \eta\mH_{\text{JC}}^k)\Delta\vtheta^k\right]^\top\mH_{\text{JC}}^{k+1}\left[\eta^2\mM_\cS\vg_\cS^k\right] = \cO(\eta^2\|\Delta\vtheta^k\|)$.
Their interactions with the remainder are bounded by $\cO(\eta\|\Delta\vtheta^k\|^3+\eta^3\|\Delta\vtheta^k\|)$ and $\cO(\eta^3\|\Delta\vtheta^k\|^2+\eta^5)$, respectively.
Combining these estimates gives
\begin{equation}
\tfrac{1}{2}(\Delta\vtheta^{k+1})^\top\mH_{\text{JC}}^{k+1}\Delta\vtheta^{k+1} = \tfrac{1}{2}(\Delta\vtheta^k)^\top\mH_{\text{JC}}^k\Delta\vtheta^k + \cO(\eta\|\Delta\vtheta^k\|^2 + \eta^2\|\Delta\vtheta^k\|+\eta^4). \label{eq:loss_quad_result}
\end{equation}

To bound the cubic remainder, \eqrefs{eq:traj_compact}{eq:contraction_norm} give $\|\Delta\vtheta^{k+1}\| = \cO(\|\Delta\vtheta^k\| + \eta^2)$. Hence
\begin{equation}
|R_3^{k+1}| \le \tfrac{\rho}{6}\|\Delta\vtheta^{k+1}\|^3 = \cO(\|\Delta\vtheta^k\|^3 + \eta^6). \label{eq:loss_cubic}
\end{equation}

Substituting \eqrefsiii{eq:loss_linear_expanded}{eq:loss_quad_result}{eq:loss_cubic} into \eqref{eq:loss_k1_taylor} yields
\begin{align}
\Delta\cL^{k+1} &= \vg_{\text{JC}}^{k+1}{}^\top(\mI - \eta\mH_{\text{JC}}^k)\Delta\vtheta^k + \eta^2\vg_{\text{JC}}^{k+1}{}^\top\mM_\cS\vg_\cS^k + \tfrac{1}{2}(\Delta\vtheta^k)^\top\mH_{\text{JC}}^k\Delta\vtheta^k \notag \\
&\quad + \underbrace{\cO(\eta\|\Delta\vtheta^k\|^2) + \cO(\eta^2\|\Delta\vtheta^k\|) + \cO(\|\Delta\vtheta^k\|^3) + \cO(\eta^3)}_{\text{total remainder}}.
\end{align}

By Lemma~\ref{lem:traj_size}, $\|\Delta\vtheta^k\| \le C\eta$ with $C = c_1(e^{TL}-1)/L$, so:
\begin{equation}
\eta\|\Delta\vtheta^k\|^2 \le C^2\eta^3, \quad \eta^2\|\Delta\vtheta^k\| \le C\eta^3, \quad \|\Delta\vtheta^k\|^3 \le C^3\eta^3.
\end{equation}

All remainder terms are therefore uniformly $\cO(\eta^3)$, which proves \eqref{eq:loss_evolution}.
\end{proof}

\section{Multi-Step Cumulative Derivation}
\label{app:cumulative}

\begin{proof}[Proof of Theorem~\ref{thm:cumulative}]
By Lemma~\ref{lem:traj_size}, $\|\Delta\vtheta^k\|=\cO(\eta)$
uniformly for $k\eta\le T$. Thus the local remainder in
\eqref{eq:traj_evolution} is uniformly $\cO(\eta^3)$. Unrolling that
recursion, with later factors on the left, yields
\begin{equation}
\Delta\vtheta^K=\eta^2\sum_{j=0}^{K-1}
\mPhi_{K,j+1}\mM_\cS^j\vg_\cS^j+\vr_K
=\vz_K+\vr_K,\qquad \|\vr_K\|\le C K\eta^3 e^{LT}.
\label{eq:traj_unrolled}
\end{equation}
Every propagation norm is at most
$(1+\eta L)^{K-1-j}\le e^{LT}$. The same estimate gives
$\|\vz_K\|\le b_K=\cO(K\eta^2)=\cO(\eta)$.
Substitute $\Delta\vtheta^K=\vz_K+\vr_K$ in
\eqref{eq:loss_from_traj}. The linear remainder is bounded by
$G\|\vr_K\|$; the quadratic difference is bounded by
$L\|\vz_K\|\|\vr_K\|+L\|\vr_K\|^2/2$; and the cubic Taylor
remainder is $\cO(\eta^3)$. For $K\ge1$ these terms are
$\cO(K\eta^3)$, proving the signed formula. Applying Cauchy--Schwarz
to its retained linear and quadratic terms proves the magnitude bound.
\end{proof}

\clearpage

\section{Experiments Details}
\label{app:intro_methods}

This appendix specifies the parameter blocks, update rules, and experimental
implementations. The three settings compare layer-wise updates in a neural network, personal/shared updates
in federated learning, and simultaneous/alternating factor updates in
LoRA. In each case, the distinction concerns the parameter values used
to evaluate a block gradient. We state the idealized update equations
and then document the optimizers, stochastic state, and loss-recording
conventions of the plotted runs. 
\begin{table}[htbp]
\centering
\setlength{\abovecaptionskip}{0pt}
\setlength{\belowcaptionskip}{6pt}
\caption{Block structure and update semantics of the experiments.}
\label{tab:intro_method_blocks}
\small
\setlength{\tabcolsep}{5pt}
\renewcommand{\arraystretch}{1.12}
\begin{tabularx}{\linewidth}{@{}
  >{\raggedright\arraybackslash}p{0.15\linewidth}
  >{\raggedright\arraybackslash}p{0.235\linewidth}
  *{2}{>{\raggedright\arraybackslash}X}@{}}
\toprule
\multirow{2}{*}{\textbf{Example}} &
\multirow{2}{*}{\textbf{Parameter blocks}} &
\multicolumn{2}{c@{}}{\textbf{Update semantics}} \\
\cmidrule(l){3-4}
& & \textbf{Simultaneous} & \textbf{Sequential} \\
\midrule
\textbf{MLP}\newline MNIST &
One weight--bias pair per layer &
\textbf{JC.} Compute all gradients before updating any parameters. &
\textbf{GS-like}\textsuperscript{$\dagger$}\textbf{.} Update output-to-input, using updated weights in the backward recursion. \\
\addlinespace[0.7em]
\textbf{ResNet-18}\newline CIFAR-10 &
Client-specific input layers $v_i$; shared output layers $u$ &
\textbf{FedSim.} Joint local updates of $(u_i,v_i)$. &
\textbf{FedAlt.} Personal phase, then shared phase using the updated personal weights. \\
\addlinespace[0.7em]
\textbf{LoRA}\newline SST-2 &
All $A$ factors;\newline all $B$ factors &
\textbf{JC.} One forward--backward pass at $(A^t,B^t)$. &
\textbf{GS.} Update $B$, then recompute the network to update $A$. \\
\bottomrule
\end{tabularx}
\par\vspace{4pt}
\begin{minipage}{\linewidth}
\footnotesize\raggedright
\textsuperscript{$\dagger$}\,The GS label in the MLP implementation denotes
in-place back-propagation, not a fully recomputed block gradient;
see Appendix~\ref{app:methods_mlp}.
\end{minipage}
\end{table}

\subsection{Layer-Wise Updates and Back-Propagation}
\label{app:methods_mlp}

Consider an $N$-layer feed-forward classifier with widths
$(d_0,\ldots,d_N)$ and $C=d_N$ output classes. Layer $\ell$ has
parameters $W_\ell\in\RR^{d_\ell\times d_{\ell-1}}$ and
$b_\ell\in\RR^{d_\ell}$, and the pair
$\theta_\ell=(W_\ell,b_\ell)$ forms one parameter block.
Given $h_0=x$, the network satisfies
\begin{equation}
z_\ell=W_\ell h_{\ell-1}+b_\ell,\qquad
h_\ell=\phi_\ell(z_\ell),\qquad \ell=1,\ldots,N.
\label{eq:methods_mlp_forward}
\end{equation}
For a minibatch $\mathcal B$ of $m$ examples and an output loss
$\ell(\cdot,y)$, the training objective is
\begin{equation}
L_{\mathcal B}(\theta)
=\frac{1}{m}\sum_{(x,y)\in\mathcal B}\ell(h_N(x;\theta),y).
\label{eq:methods_mlp_loss}
\end{equation}
Following \citet{lecun1988framework}, we introduce the layer states
as independent variables and write the equivalent constrained problem
\begin{equation}
\begin{aligned}
\min_{\theta,h}\quad
 &\frac{1}{m}\sum_{(x,y)\in\mathcal B}\ell(h_N(x),y),\\
\text{subject to}\quad
 &h_\ell(x)=\phi_\ell(W_\ell h_{\ell-1}(x)+b_\ell),
 \quad \ell=1,\ldots,N,\quad (x,y)\in\mathcal B.
\end{aligned}
\label{eq:methods_mlp_constrained}
\end{equation}
Here $h$ collects the states of all examples, with $h_0(x)=x$
fixed. Throughout a parameter sweep, the minibatch is held fixed.
We suppress the example index below and use $\sum_{\mathcal B}$
for summation over this minibatch.
Associating a multiplier $\lambda_\ell(x)\in\RR^{d_\ell}$ with
each constraint gives the Lagrange function
\begin{equation}
\mathcal J(\theta,h,\lambda)
=\frac{1}{m}\sum_{\mathcal B}
\left[\ell(h_N,y)+\sum_{\ell=1}^{N}
\lambda_\ell^\top\bigl(h_\ell-\phi_\ell(W_\ell h_{\ell-1}+b_\ell)\bigr)
\right].
\label{eq:methods_bp_lagrangian}
\end{equation}
At a differentiable constrained stationary point, the first-order
conditions are
\begin{equation}
\nabla_\lambda\mathcal J=0,\qquad
\nabla_h\mathcal J=0,\qquad
\nabla_\theta\mathcal J=0.
\label{eq:methods_bp_stationarity}
\end{equation}
For fixed $\theta$, the first condition recovers the forward
recursion in \eqref{eq:methods_mlp_forward}. The second determines
the multipliers through the terminal condition and backward recursion
\begin{equation}
\lambda_N=-\nabla_{h_N}\ell(h_N,y),\qquad
\lambda_\ell=W_{\ell+1}^\top
D\phi_{\ell+1}(z_{\ell+1})^\top\lambda_{\ell+1},
\quad \ell=N-1,\ldots,1,
\label{eq:methods_bp_multipliers}
\end{equation}
where $D\phi_\ell$ is the activation Jacobian. With the constraint
sign in \eqref{eq:methods_bp_lagrangian}, the preactivation error
$\delta_\ell=-D\phi_\ell(z_\ell)^\top\lambda_\ell$ satisfies
\begin{align}
\delta_N
 &=D\phi_N(z_N)^\top\nabla_{h_N}\ell(h_N,y),
\notag\\
\delta_\ell
 &=D\phi_\ell(z_\ell)^\top W_{\ell+1}^\top\delta_{\ell+1},
\quad \ell=N-1,\ldots,1.
\label{eq:methods_bp_adjoint}
\end{align}
Thus $\delta_\ell$ is the derivative of the per-example loss with
respect to $z_\ell$. For elementwise hidden activations, the second
line reduces to
$\delta_\ell=(W_{\ell+1}^\top\delta_{\ell+1})
\odot\phi_\ell'(z_\ell)$. In classification with one-hot labels
$y\in\RR^C$, softmax cross-entropy gives
\begin{equation}
\phi_N=\operatorname{softmax},\qquad
\ell(p,y)=-\sum_{c=1}^{C}y_c\log p_c,
\qquad \delta_N=h_N-y.
\label{eq:methods_mlp_cross_entropy}
\end{equation}

Let $h(\theta)$ and $\lambda(\theta)$ denote the resulting states
and multipliers. Since the constraints vanish and
$\nabla_h\mathcal J=0$, differentiating the reduced objective yields
\begin{equation}
\nabla_{\theta_\ell}L_{\mathcal B}(\theta)
=\left.\partial_{\theta_\ell}\mathcal J(\theta,h,\lambda)
\right|_{h=h(\theta),\,\lambda=\lambda(\theta)}.
\label{eq:methods_bp_reduced_gradient}
\end{equation}
In particular, the weight and bias gradients are
\begin{equation}
g_{W_\ell}(\theta)=\frac{1}{m}\sum_{\mathcal B}
\delta_\ell(\theta)h_{\ell-1}(\theta)^\top,
\qquad
g_{b_\ell}(\theta)=\frac{1}{m}\sum_{\mathcal B}\delta_\ell(\theta).
\label{eq:methods_bp_gradient}
\end{equation}
The remaining condition $\nabla_\theta\mathcal J=0$ requires
these gradients to vanish at stationarity. Gradient descent uses
their negatives as update directions. This derivation identifies
back-propagation as the computation of the reduced gradient through
forward states and backward adjoints.

The JC rule evaluates every layer gradient at the same parameter
vector $\theta^t$. Writing $h_j^t=h_j(\theta^t)$,
$z_j^t=z_j(\theta^t)$, and $\delta_j^t=\delta_j(\theta^t)$,
one forward pass and one backward pass give
\begin{align}
W_\ell^{t+1}
 &=W_\ell^t-\frac{\eta}{m}\sum_{\mathcal B}
   \delta_\ell^t(h_{\ell-1}^t)^\top,
&b_\ell^{t+1}
 &=b_\ell^t-\frac{\eta}{m}\sum_{\mathcal B}\delta_\ell^t,
\label{eq:methods_mlp_jc}
\end{align}
for $\ell=1,\ldots,N$ and step size $\eta>0$. Equivalently,
$\theta_\ell^{t+1}=\theta_\ell^t-\eta\nabla_{\theta_\ell}
L_{\mathcal B}(\theta^t)$. All states, activation Jacobians, and
weights in the adjoint recursion correspond to $\theta^t$.
This common evaluation state defines JC regardless of the order of
parameter assignment. In particular, the reverse order of derivative
evaluation in back-propagation does not imply GS optimization.

A GS sweep evaluates each layer gradient after incorporating the
preceding block updates. For the output-to-input order
$\ell=N,N-1,\ldots,1$, the parameter state before updating
layer $\ell$ is
\begin{equation}
\bar\theta^{t,\ell}
=\bigl(\theta_1^t,\ldots,\theta_\ell^t,
       \theta_{\ell+1}^{t+1},\ldots,\theta_N^{t+1}\bigr).
\label{eq:methods_mlp_gs_state}
\end{equation}
The corresponding partial derivative follows from the same
constrained formulation, with the forward and adjoint equations
solved at $\bar\theta^{t,\ell}$. All other parameter blocks are
held constant when taking this derivative, including those already
updated during the sweep.

Define $\bar h_j^{t,\ell}=h_j(\bar\theta^{t,\ell})$ and
$\bar z_j^{t,\ell}=z_j(\bar\theta^{t,\ell})$.
Because no layer up to $\ell$ has yet changed,
$\bar h_j^{t,\ell}=h_j^t$ for $0\leq j\leq\ell$ and
$\bar z_j^{t,\ell}=z_j^t$ for $1\leq j\leq\ell$.
The remaining forward states are obtained from
\begin{equation}
\bar z_j^{t,\ell}
=W_j^{t+1}\bar h_{j-1}^{t,\ell}+b_j^{t+1},\qquad
\bar h_j^{t,\ell}=\phi_j(\bar z_j^{t,\ell}),
\quad j=\ell+1,\ldots,N.
\label{eq:methods_mlp_gs_forward}
\end{equation}
The adjoint recursion at this state is
\begin{align}
\bar\delta_N^{t,\ell}
 &=D\phi_N(\bar z_N^{t,\ell})^\top
   \nabla_{h_N}\ell(\bar h_N^{t,\ell},y),
\notag\\
\bar\delta_j^{t,\ell}
 &=D\phi_j(\bar z_j^{t,\ell})^\top
   (W_{j+1}^{t+1})^\top\bar\delta_{j+1}^{t,\ell},
\quad j=N-1,\ldots,\ell.
\label{eq:methods_mlp_gs_adjoint}
\end{align}
For softmax cross-entropy,
$\bar\delta_N^{t,\ell}=\bar h_N^{t,\ell}-y$.
The layer update is therefore
\begin{align}
W_\ell^{t+1}
 &=W_\ell^t-\frac{\eta}{m}\sum_{\mathcal B}
   \bar\delta_\ell^{t,\ell}(h_{\ell-1}^t)^\top,
&b_\ell^{t+1}
 &=b_\ell^t-\frac{\eta}{m}\sum_{\mathcal B}\bar\delta_\ell^{t,\ell}.
\label{eq:methods_mlp_exact_gs}
\end{align}
Equivalently, $\theta_\ell^{t+1}=\theta_\ell^t
-\eta\nabla_{\theta_\ell}L_{\mathcal B}(\bar\theta^{t,\ell})$.
The feature $h_{\ell-1}^t$ can be reused because the parameters
producing it remain unchanged. The output and affected adjoints
must be recomputed at each substep. Hence the output-layer update
agrees with JC at a common starting point, while earlier-layer
updates generally differ through their dependence on the updated
subsequent layers.

These equations instantiate Definition~\ref{def:updates} with one
weight--bias pair per block and GS order
$(\theta_N,\ldots,\theta_1)$. A direct implementation of GS
repeats the forward and backward computations for each layer;
reusing unchanged intermediate states preserves the same gradients.
The loss-comparison results additionally require the stated
smoothness assumptions. For ReLU networks, these assumptions apply
locally away from activation boundaries and need not hold when an
update crosses a boundary.

For panel~(a), the available sequential implementation uses updated
weights in a single backward recursion while retaining the original
forward states and output residual. Its updates therefore need not
equal the mixed-state gradients above, which accounts for the
GS-like designation in Table~\ref{tab:intro_method_blocks}.
Both implementations use one forward pass and one backward recursion
per minibatch and clip each weight or bias gradient $g$ separately
using $\Pi_\tau(g)=g\min\{1,\tau/\|g\|_{\mathrm F}\}$ for
$g\ne0$, with $\Pi_\tau(0)=0$ and threshold $\tau>0$.

The plotting record identifies an 8-layer MNIST network with
learning rate $10^{-3}$. A matching configuration in the
available training script sets $(d_0,\ldots,d_N)=(784,512,384,256,192,128,64,32,10)$,
with ReLU hidden activations and a softmax output. It uses the
60,000 training images, flattened and divided by $255$,
Xavier-normal weights, zero biases, batch size $m=128$, $40$
epochs, and clipping threshold $\tau=5$.


\subsection{Federated Learning with Partial Model Personalization}
\label{app:methods_federated}

Following \citet{pillutla2022fedalt}, we partition the model into
shared parameters $u$ and client-specific parameters
$V=(v_1,\ldots,v_M)$. Client $i$ holds a dataset $\mathcal D_i$
of $n_i$ training examples, giving the objective
\begin{align}
\min_{u,v_1,\ldots,v_M} F(u,V)
&=\sum_{i=1}^{M}p_iF_i(u,v_i),\qquad p_i=\frac{n_i}{\sum_jn_j},
\notag\\
F_i(u,v_i)&=\frac{1}{n_i}\sum_{(x,y)\in\mathcal D_i}
\ell(f(x;u,v_i),y).
\label{eq:methods_fed_objective}
\end{align}
The shared parameters couple the client objectives, while each
$v_i$ is retained locally across communication rounds and is
excluded from server aggregation.

At communication round $t$, the server samples a client subset
$S_t$ and broadcasts $u^t$. Each selected client initializes
$u_{i,0}=u^t$ and $v_{i,0}=v_i^t$. FedSim uses the LocalSim
procedure to update both blocks from the same local state. With
step sizes $\gamma_v$ and $\gamma_u$ and stochastic partial
gradients evaluated on minibatch $\xi_{i,k}$, its plain-SGD form is
\begin{align}
v_{i,k+1}
 &=v_{i,k}-\gamma_v\widehat\nabla_vF_i(u_{i,k},v_{i,k};\xi_{i,k}),
\notag\\
u_{i,k+1}
 &=u_{i,k}-\gamma_u\widehat\nabla_uF_i(u_{i,k},v_{i,k};\xi_{i,k}),
\qquad k=0,\ldots,\tau-1.
\label{eq:methods_fedsim}
\end{align}
Both partial derivatives are evaluated at $(u_{i,k},v_{i,k})$,
so the shared update uses the personal parameters before their
update. After $\tau$ local steps, the client returns
$u_i^+=u_{i,\tau}$ and retains $v_i^+=v_{i,\tau}$.
The common gradient-evaluation state gives each local step its
JC structure.

FedAlt uses the LocalAlt procedure, which first adapts the personal
parameters with the broadcast shared parameters fixed:
\begin{equation}
v_{i,k+1}
=v_{i,k}-\gamma_v\widehat\nabla_vF_i(u^t,v_{i,k};\xi^v_{i,k}),
\quad k=0,\ldots,\tau_v-1,\qquad v_i^+=v_{i,\tau_v}.
\label{eq:methods_fedalt_v}
\end{equation}
The shared phase then starts from $u_{i,0}=u^t$ and holds the
adapted personal parameters $v_i^+$ fixed:
\begin{equation}
u_{i,k+1}
=u_{i,k}-\gamma_u\widehat\nabla_uF_i(u_{i,k},v_i^+;\xi^u_{i,k}),
\quad k=0,\ldots,\tau_u-1,\qquad u_i^+=u_{i,\tau_u}.
\label{eq:methods_fedalt_u}
\end{equation}
The shared gradient therefore depends on the updated personal
model, establishing the block order $(v_i,u_i)$. The two phases
may use different minibatches. With $\tau_v=\tau_u=1$, a common
step size, and exact gradients of the same fixed local objective,
these equations reduce to the two-block GS rule in
Definition~\ref{def:updates}. Multiple local steps, stochastic
sampling, and server aggregation introduce dynamics beyond a
single deterministic GS sweep.

After either local procedure, the implementation for panel~(b)
aggregates the participating clients' shared parameters using
normalized sample-count weights:
\begin{equation}
u^{t+1}=\sum_{i\in S_t}\frac{n_i}{\sum_{j\in S_t}n_j}\,u_i^+,
\qquad
v_i^{t+1}=
\begin{cases}
v_i^+,&i\in S_t,\\
v_i^t,&i\notin S_t.
\end{cases}
\label{eq:methods_fed_aggregation}
\end{equation}
Only shared state is communicated. These aggregation weights
differ from the uniform weights in the uniform-client formulation
of \citet{pillutla2022fedalt}. Under uniform client sampling and
unequal shard sizes, normalization over the selected subset need
not yield an unbiased estimate of the full-population weighted
aggregate.

Panel~(b) instantiates this partition in a CIFAR-resolution
ResNet-18 with a $3\times3$, stride-one, 64-channel stem and no
initial max-pooling layer. The personal block comprises the stem
and the first two residual stages, with 64 and 128 channels. The
shared block comprises the remaining stages, with 256 and 512
channels, global average pooling, and the ten-class classifier.
Each stage contains two basic residual blocks, giving the model
composition
$f(x;u,v_i)=f_{\mathrm{shared},u}(f_{\mathrm{personal},v_i}(x))$.

The 50,000 CIFAR-10 training examples are partitioned across $M=20$ clients by independently drawing class-wise proportions from a symmetric Dirichlet distribution with concentration $0.5$. Inputs are normalized with channel means $(0.4914,0.4822,0.4465)$ and standard deviations $(0.2470,0.2435,0.2616)$; no random crop or flip is applied.

Both methods run for 60 communication rounds, with ten clients
sampled uniformly without replacement per round. They use the
same initialization, data partition, and client-selection schedule,
with batch size 64, learning rate $10^{-3}$, momentum $0.9$,
weight decay $10^{-4}$, and seed 0. The implementation specifies
local computation in epochs: FedSim performs one epoch of joint
updates, while FedAlt performs one personal epoch followed by one
shared epoch. Each epoch traverses a newly shuffled local loader.

\subsection{Simultaneous and Alternating Updates for LoRA}
\label{app:methods_lora}

LoRA freezes a pretrained weight matrix
$W_0\in\RR^{d_{\mathrm{out}}\times d_{\mathrm{in}}}$ and
parameterizes its update through two low-rank factors $A$ and $B$
\citep{hu2021lora}. For input features $Z$, the layer output is
\begin{equation}
Y=(W_0+sBA)Z,\qquad
s=\frac{\alpha}{r},\qquad
A\in\RR^{r\times d_{\mathrm{in}}},\quad
B\in\RR^{d_{\mathrm{out}}\times r}.
\label{eq:methods_lora_layer}
\end{equation}
Here $r$ determines the adapter rank and $\alpha$ sets the scale
of the weight update. For a loss $L$ and output adjoint
$E=\partial L/\partial Y$, differentiation gives
\begin{equation}
\nabla_BL=sE(AZ)^\top,\qquad
\nabla_AL=sB^\top EZ^\top.
\label{eq:methods_lora_gradients}
\end{equation}
The features and adjoints in these expressions are evaluated at
the current network state. With adapter dropout, the adapter branch
uses the masked features in place of $Z$, while the pretrained
branch retains the original input. For a network with multiple
adapters, we use $A$ and $B$ to denote the collections of all
corresponding factors, forming two parameter blocks across the
network.

The coupling between these blocks permits simultaneous and
alternating updates, as studied by \citet{zhu2026lorae2}. For a
fixed minibatch objective $L_{\mathcal B_t}$, the simultaneous
plain-gradient rule evaluates both partial derivatives at
$(A^t,B^t)$:
\begin{equation}
B^{t+1}=B^t-\eta_B\nabla_BL_{\mathcal B_t}(A^t,B^t),\qquad
A^{t+1}=A^t-\eta_A\nabla_AL_{\mathcal B_t}(A^t,B^t).
\label{eq:methods_lora_jc}
\end{equation}
A single forward--backward pass supplies both gradients. The
alternating rule first updates $B$ and then evaluates the gradient
for $A$ at $(A^t,B^{t+1})$:
\begin{equation}
B^{t+1}=B^t-\eta_B\nabla_BL_{\mathcal B_t}(A^t,B^t),\qquad
A^{t+1}=A^t-\eta_A\nabla_AL_{\mathcal B_t}(A^t,B^{t+1}).
\label{eq:methods_lora_gs}
\end{equation}
Computing the second gradient requires a new network evaluation
after the $B$ update. The resulting adjoints, and any adapter input
features affected by that update, must correspond to the mixed
state $(A^t,B^{t+1})$. Substituting $B^{t+1}$ into
\eqref{eq:methods_lora_gradients} while retaining the previous
adjoints does not generally recover this partial derivative.

With a common step size $\eta_A=\eta_B=\eta$ and a fixed
deterministic objective, these rules are exactly the two-block JC and
GS updates in Definition~\ref{def:updates}, with block order $(B,A)$.
The cross-Hessian blocks between the factors determine the coupling
term in the one-step comparison of Theorem~\ref{thm:master}, while
the evolution of the gradients and curvature along the two
trajectories enters Theorem~\ref{thm:cumulative}.

Panel~(c) instantiates these updates in T5-base, as identified by
the saved adapter configurations and training records. SST-2 is
formulated as text-to-text classification, with the input prefix
\texttt{sentiment classification:} and target strings
\texttt{negative} and \texttt{positive}. Inputs are truncated or
padded to 128 tokens and targets to two tokens. 

The run uses the first 2,000 training examples and first 500 validation examples in the stored dataset order; the training loader subsequently shuffles examples. Adapters are attached to the query and value projections in the encoder self-attention, decoder self-attention, and decoder cross-attention modules. Their settings are $r=8$, $\alpha=16$, dropout $0.1$, no trainable biases, and 884,736 trainable adapter parameters. All pretrained parameters remain frozen.
Both runs use the same Gaussian initialization $A_{ij}^0\sim\mathcal N(0,2/d_{\mathrm{in}})$ and $B^0=0$.
They retain their respective update rules throughout training: simultaneous factor updates for JC and $B$-then-$A$ updates for GS.

\end{document}